\theoremstyle{thmstyleone}%
\newtheorem{theorem}{Theorem}
\theoremstyle{thmstyletwo}%
\newtheorem{remark}{Remark}
\theoremstyle{thmstylethree}%
\newcommand{\beq}{\begin{equation}}
\newcommand{\eeq}{\end{equation}}
\newcommand{\beqq}{\begin{equation*}}
\newcommand{\eeqq}{\end{equation*}}
\newcommand{\beqas}{\begin{eqnarray*}}
\newcommand{\eeqas}{\end{eqnarray*}}
\newcommand{\bsp}{\begin{split}}
\newcommand{\eesp}{\end{split}}
\newcommand{\bfa}[1]{\boldsymbol{#1}}
\numberwithin{equation}{section}
\newlist{Assumption}{enumerate}{1}
\setlist[Assumption]{label=A\arabic*}
\definecolor{Blue}{rgb}{0,0,1}
\definecolor{Red}{rgb}{1,0,0}
\definecolor{Green}{rgb}{0,1,0}
\definecolor{Cyan}{rgb}{0,0.72,0.92}
\definecolor{Amethyst}{rgb}{0.6,0.4,0.8}
\definecolor{Bronze}{rgb}{0.8,0.5,0.2}
\definecolor{Violet}{rgb}{0.54,0.17,0.89}
\newlist{steps}{enumerate}{1}
\setlist[steps, 1]{label = Step \arabic*:}
\newcommand{\fullstate}{\bfa{x}}
\newcommand{\enc}{\phi_{\textnormal{e}}}
\newcommand{\dec}{\phi_{\textnormal{d}}}
\newcommand{\fullstateApprox}{\widetilde{\fullstate}}
\newcommand{\fullstateReconstruct}{\widehat{\fullstate}}
\newcommand{\err}{e}
\newcommand{\recon}{\textnormal{rec}}
\newcommand{\Jac}{\textnormal{Jac}}
\newcommand{\inte}{\textnormal{int}}
\def\norm#1{\|#1\|}
\begin{document}

\title[tLaSDI: Thermodynamics-informed latent space dynamics identification]{tLaSDI: Thermodynamics-informed latent space dynamics identification}


\author[1]{\fnm{Jun Sur R.} \sur{Park}}

\author[2]{\fnm{Siu Wun} \sur{Cheung}}

\author*[2]{\fnm{Youngsoo} \sur{Choi}}\email{choi15@llnl.gov}

\author*[3,4]{\fnm{Yeonjong} \sur{Shin}}\email{yeonjong$\_$shin@ncsu.edu}

\affil[1]{\orgdiv{Center for Artificial Intelligence and Natural Sciences}, \orgname{Korea Institute for Advanced Study (KIAS)}, \orgaddress{\city{Seoul}, \postcode{02455}, \country{Republic of Korea}}}

\affil[2]{\orgdiv{Center for Applied Scientific Computing}, \orgname{Lawrence Livermore National Laboratory}, \orgaddress{ \city{Livermore}, \postcode{94550}, \state{CA}, \country{USA}}}

\affil[3]{\orgdiv{Department of Mathematics}, \orgname{North Carolina State University}, \orgaddress{ \city{Raleigh}, \postcode{27695}, \state{NC}, \country{USA}}}

\affil[4]{\orgdiv{Mathematical Institute for Data Science}, \orgname{Pohang University of Science and Technology (POSTECH)}, \orgaddress{ \city{Pohang}, \postcode{37673}, \country{Republic of Korea}}}


\abstract{We propose a latent space dynamics identification method, namely tLaSDI, that embeds the first and second principles of thermodynamics. 
The latent variables are learned through an autoencoder as a nonlinear dimension reduction model.
The latent dynamics are constructed by a neural network-based model that precisely preserves certain structures for the thermodynamic laws through the GENERIC formalism.
An abstract error estimate is established, which provides a new loss formulation involving the Jacobian computation of autoencoder.
The autoencoder and the latent dynamics are simultaneously trained to minimize the new loss.
Computational examples demonstrate the effectiveness of tLaSDI, which exhibits robust generalization ability, even in extrapolation.
In addition, an intriguing correlation is empirically observed between a quantity from tLaSDI in the latent space and the behaviors of the full-state solution.
}

\keywords{reduced order model, thermodynamics, abstract error estimates, dynamical systems, data-driven discovery}



\maketitle

\section{Introduction}
\label{sec:intro}

For centuries, scientific research relied on 
established experiments, theories, and computational methods. 
First principles in physics, such as Newton's laws, 
Maxwell's equations, and the laws of thermodynamics, 
guided scientific exploration and 
fueled technological advancements in various disciplines. 
These principles were often derived or conceptualized by experimental or observational data. 
Leveraging advanced computing along with the first principles has played a pivotal role 
in understanding complex physical phenomena 
and solving challenging scientific and engineering problems.
With the advent of machine learning, a data-driven scientific computing approach has drawn huge attention.
The idea is to harness available data for scientific computing to complement or advance data-free approaches.
Many novel methods have been proposed in this regard, e.g. 
\cite{bongard2007automated,schmidt2009distilling,
brunton2016discovering,raissi2018hidden,wu2020data}. 


The reduced order models (ROMs) identify intrinsic low-dimensional structures for 
economically representing the solution in a high-dimensional space 
and formulate an underlying physical process for the simplified representation. 
ROMs have achieved significant success in many challenging physical models 
such as Navier-Stokes equations \cite{xiao2014non,burkardt2006pod}, 
Burgers' equation \cite{choi2019space,choi2020sns,carlberg2018conservative}, 
the Euler equations \cite{copeland2022reduced,cheung2022local}, 
shallow water equations \cite{zhao2014pod,cstefuanescu2013pod}, and 
Boltzmann transport problems \cite{choi2021space}. 
There are two major types of ROMs in building the latent dynamics. One is intrusive and the other is non-intrusive. 
The projection-based ROM is a typical intrusive approach that requires invasive changes to the source code of the high-fidelity physics solver, which is not always feasible. 
Yet, since the projection-based approach exploits the underlying physics, it yields robust extrapolation ability.
On the contrary, the non-intrusive ROM (e.g. dynamic mode decomposition \cite{schmid2010dynamic})
does not require access to the source code, which at the same time, makes it challenging to encode underlying physical laws into the model.
Thus, non-intrusive ROMs are typically regarded as black-box as it is not interpretable with existing scientific knowledge \cite{fries2022lasdi,he2023glasdi,bonneville2024gplasdi}. 
In building the latent space, there are two major constructions. One is the linear subspace (LS) and 
the other is the nonlinear manifold (NM).
The LS can be efficiently constructed 
by the singular value decomposition on the solution snapshot matrices.
However, it is limited by the assumption that 
the intrinsic solution space falls into a subspace with a small dimension, 
i.e., the solution space has a small Kolmogorov n-width. 
To address this limitation, the NM is proposed 
\cite{lee2020model,lee2021deep,kim2020efficient,kim2022fast}, 
which uses autoencoder for better representation ability and reconstruction accuracy.

The present work proposes a non-intrusive NM ROM method that possesses thermodynamic structures, namely, thermodynamics-informed latent space dynamics identification (tLaSDI).
The general equation for the 
non-equilibrium reversible-irreversible coupling (GENERIC) formalism
\cite{grmela1997dynamics,ottinger1997dynamics,ottinger2005beyond}
describes a general dynamical system beyond the equilibrium, which comprises of the reversible and irreversible components. 
The generators for the reversible and irreversible dynamics satisfy certain conditions, which ensure the first and second principles of thermodynamics.
tLaSDI uses the GENERIC formalism to design the latent dynamics 
and identifies the latent variables through an autoencoder \cite{demers1992non,hinton2006reducing}.
A distinct feature is that the latent dynamics obey the first and second laws of thermodynamics through the GENERIC formalism and thus yield energy and entropy functions in the latent space.
In particular, GENERIC formalism informed NNs (GFINNs) \cite{zhang2022gfinns} are employed as our design choice for the latent dynamics. GFINNs are NN-based models that exactly preserve the required structures with universal approximation properties.
Consequently, tLaSDI preserves the thermodynamic laws in the latent dynamics without invasive changes in the source codes.
In addition, we establish an abstract error estimate of the ROM approximation (Theorem~\ref{thm:total_err}), which characterizes all the error components in terms of the encoder, decoder, and latent dynamics. 
Based on the estimate, we derive a new loss formulation on which the encoder, decoder, and thermodynamic-informed latent dynamics are simultaneously trained.

The idea of imposing thermodynamic structures in latent dynamics is not new \cite{ottinger2015preservation,yu2021onsagernet,chen2023constructing,hernandez2021deep,moya2022physics,masi2022multiscale}.
Perhaps, the most relevant work to tLaSDI is \cite{hernandez2021deep}, which proposed an NN approach to design the latent dynamics from the GENERIC formalism. 
However, there are several major differences. 
One is the choice of the models for the latent dynamics.
\cite{hernandez2021deep} employs an NN-based model 
that does not satisfy the structural conditions exactly by construction, rather relies on an additional penalty term to enforce them.
Another is the loss formulation.
The standard loss consists of two terms - one matches the forward step prediction of the latent dynamics and the other deviates from the autoencoder. 
The proposed new loss function introduces additional two loss terms that involve Jacobian, which turns out to bring significant improvements in generalization ability.
The last one is the way the latent variables and dynamics are trained. \cite{hernandez2021deep} learns the latent variables separately, independent of latent dynamics. 
The latent dynamics are then trained later to fit the fixed latent variables. This separate training approach has been adopted in the literature, e.g., see \cite{moya2022physics,fries2022lasdi}. 
In contrast, tLaSDI uses the simultaneous training approach that trains the latent variables and their dynamics at the same time. This strategy has been employed in the literature as well, e.g., see \cite{champion2019data,vijayarangan2024data,he2023glasdi,bonneville2024gplasdi,chen2023constructing}.

Several numerical experiments are reported to demonstrate the performance of tLaSDI and verify the effectiveness of the new loss function.
Results show that tLaSDI exhibits robust extrapolation ability, which pure data-driven approaches typically lack.
Also, it is empirically observed that the latent entropy function manifests a correlated behavior aligned with the full-state solution.
For the parametric Burgers' equation, the rate of entropy production has the largest value at the final time when the solution exhibits the stiffest behavior.
In contrast, the entropy production rate for the heat equation behaves the opposite as it saturates as time goes on, which aligns with the solution behavior of the heat equation. See Figures~\ref{fig:BG_mean_std_S_dSdt_sol}
and ~\ref{fig:HT_mean_std_S_dSdt_sol}.


The rest of the paper is organized as follows. 
Section~\ref{sec:prob} describes the problem setup and some preliminaries.
The proposed method, tLaSDI, is presented in Section~\ref{sec:rom} along with the new loss formulation. 
Section~\ref{sec:error_estimates} is devoted to the abstract error estimate.
Numerical examples are presented in Section~\ref{sec:num} before the conclusions in Section~\ref{sec:conclusion}.

\section{Preliminaries and problem formulation}
\label{sec:prob}

Let us consider a dynamical system (the full-order model)
\beq
\label{eq:full}
\bsp
\dot{\bfa{x}}_\mu(t) &= F_\mu(\bfa{x}_\mu(t),t) \quad \forall t >0 \quad \text{with }
\bfa{x}_\mu(0) = \text{x}^0(\mu) \in \mathbb{R}^{N},
\end{split}
\eeq
defined in a high dimensional space $\mathbb{R}^{N}$, 
where $\mu$ represents some parameters of interest in $\Omega_\mu$
and $F_{\mu}$ is an appropriate field function that guarantees the existence of the solution to \eqref{eq:full} for all $\mu \in \Omega_\mu$.

Since the state dimension is very large, i.e., $N \gg 1$, a direct numerical simulation is computationally costly, prohibiting multiple simulations at varying $\mu$.
Even worse, numerical simulations are not possible if $F_{\mu}$ is unknown. 
The non-intrusive ROM can be used in this regard to achieve the following two goals.
\begin{itemize}
\item {\bf Computational savings: } When $F_{\mu}$ is fully known, the goal of ROM is to reduce the computational time in simulating the full-order dynamics at multiple $\mu$.
It requires the full-state solution data at selected parameters from direct numerical simulations of \eqref{eq:full}. 
The qualify of the ROM approximation is measured by the prediction accuracy on unseen parameters.
In particular, it is popularly used for parametric 
partial differential equations (PDE) problems.
\item {\bf Data-driven discovery of dynamical systems: } When $F_{\mu}$ is not available, 
the goal of ROM is to learn a dynamical system from data that produces trajectories that are similar to data (interpolation) or predict states in future times (extrapolation).
Here the trajectory data is assumed to be given. 
The pure data-driven approach, however, is known to lack robust extrapolation ability.
This is the task where the intrusive ROM is not applicable as $F_\mu$ is unknown.
\end{itemize}

In the ROM framework, the full-state variables $\bfa{x} \in \mathbb{R}^{N}$ are transformed to the so-called latent variables $\bfa{z}\in \mathbb{R}^{n}$ that lie in a much lower dimensional space, i.e., $n \ll N$. Such a mapping is called an encoder $\phi_\text{e}(\cdot): \mathbb{R}^{N} \to \mathbb{R}^{n}$.
Given an encoder, a latent dynamical system is formed
\beq
\label{eq:reduced}
\dot{\bfa{z}}_\mu(t) = F^r_\mu(\bfa{z}_\mu(t),t)
\quad \forall t >0 \quad \text{with }
\bfa{z}_\mu(0) = \phi_{\text{e}}({\bfa x}_\mu(0)) \in \mathbb{R}^n.
\eeq
Lastly, a decoder function $\phi_\text{d}(\cdot): \mathbb{R}^{n} \to \mathbb{R}^{N}$ transforms the latent variables back to the original full-order space, providing the ROM approximation to ${\bfa x}_\mu(t)$:
\begin{equation*}
\label{eq:recon_phid_X}
\widetilde{\bfa{x}}_\mu(t) = \phi_{\text{d}}(\bfa{z}_\mu(t)),
\end{equation*}
where $\bfa{z}_\mu$ is the solution to \eqref{eq:reduced}.

The constructions of the encoder, decoder, and latent dynamics determine a specific ROM method. 
In general, such constructions are obtained from data.
Here the data refers to a set of either direct numerical simulations of \eqref{eq:full} or a collection of spatiotemporal trajectories.
The present work focuses on developing a new NN-based model for latent dynamics to be learned from data while leveraging nonlinear dimension reduction via autoencoder.

\subsection{Neural networks}
\label{sec:fnn}
A $L$-layer feed-forward NN is a function  
$f_{\text{NN}}: \mathbb{R}^{d_{\text{in}}} \ni x \mapsto f^L(x) \in \mathbb{R}^{d_{\text{out}}} $ 
defined recursively according to 
\begin{equation*}
f^l(x) = W^l \sigma(f^{l-1}(x)) + b^l, \ \ 2\leq l \leq L, 
\end{equation*}
with $f^1(x) = W^1x+b^1$. The activation function $\sigma$ is a scalar function defined on $\mathbb{R}$ applied in element-wise. 
It is typically chosen to satisfy the conditions of the universal approximation theorem \cite{cybenko1989approximation,mhaskar1996neural,siegel2020approximation}. 
$W^l\in \mathbb{R}^{n_l\times n_{l-1}}$ 
and $b^l\in \mathbb{R}^{n_l}$ are the weight matrix and the bias vector of the $\ell$-th layer, respectively, where $n_0 = d_{\text{in}}$, $n_{L} = d_{\text{out}}$. 
The collection of all the weight matrices and bias vectors is denoted by $\bfa{\theta}$.
To explicitly acknowledge the dependency of $\bfa{\theta}$, the NN is often denoted as $f_{\text{NN}}(\cdot;\bfa{\theta})$.

In the parametric dynamical systems, 
it is often needed to let the NN parameters depend on $\mu \in \Omega_p$ to enhance the flexibility and prediction accuracy.
The hypernetwork \cite{ha2017hypernetworks} is a NN model designed for this purpose, which introduces another NN that takes $\mu$ as input and outputs the network parameter $\bfa{\theta}(\mu)$ of $f_{\text{NN}}$.
Accordingly, the hypernetwork is denoted by $f_{\text{NN}}(\cdot;\bfa{\theta}(\mu))$.


The autoencoder (AE) 
is an NN-based model designed for dimension reduction, which comprises of 
an encoder $\phi_{\text{e}}(\cdot;\bfa{\theta}_\text{e}): \mathbb{R}^{N}\to\mathbb{R}^{n}$ and a decoder $\phi_{\text{d}}(\cdot;\bfa{\theta}_\text{d}):\mathbb{R}^{n}\to\mathbb{R}^{N}$
where $\bfa{\theta}_\text{e}$ and 
$\bfa{\theta}_\text{d}$ are the NN parameters for the encoder and decoder respectively. 
The autoencoder takes high-dimensional inputs 
$\bfa{x}$ and transforms it to the so-called latent variables $\bfa{z}$ through the encoder.
The decoder then takes $\bfa{z}$ and returns high-dimensional variables $\widehat{\bfa{x}}$
having the same size as the original inputs.
Ideally, the autoencoder is trained to reconstruct the original inputs, i.e., 
$\bfa{x} \approx \widehat{\bfa{x}} = \phi_{\text{d}}(\phi_{\text{e}}(\bfa{x};\bfa{\theta}_\text{e});\bfa{\theta}_\text{d})$.

\subsection{GENERIC formalism}
The GENERIC formalism \cite{grmela1997dynamics,ottinger2005beyond} is a mathematical framework that encompasses both conservative and dissipative systems, providing a comprehensive description of beyond-equilibrium thermodynamic systems. 
The formalism describes the dynamical systems involving four functions -- the scalar functions $E$ and $S$ that represent the total energy and entropy of the system respectively, and the matrix-valued functions $L$ and $M$ called the Poisson and the friction matrices, respectively; 
\begin{equation}
\label{eq:GENERIC}
\begin{split}
&\dot{\bfa{z}} = L(\bfa{z}) \nabla E(\bfa{z}) +M(\bfa{z}) \nabla S(\bfa{z}), \\
\textrm{subject to} \ \ &L(\bfa{z}) \nabla S(\bfa{z}) = M(\bfa{z}) \nabla E(\bfa{z}) = \bfa{0}, \\
&L(\bfa{z}) \ \textrm{is skew-symmetric, i.e.,} \ L(\bfa{z}) = -L(\bfa{z})^\top,\\
&M(\bfa{z}) \ \textrm{is symmetric} \ \& \ \textrm{positive semi-definite.} 
\end{split}
\end{equation}


The degeneracy conditions guarantee the first and second laws of thermodynamics, i.e., the following properties of energy conservation and non-decreasing entropy, as
$\frac{d}{dt}E(\bfa{z}) = 0$ and $\frac{d}{dt}S(\bfa{z}) \geq 0$.

{\bf GFINNs:}
Several NN-based models were proposed to embed the GENERIC formalism \cite{hernandez2021structure,lee2021machine,zhang2022gfinns}.
In particular, we briefly review the model proposed in \cite{zhang2022gfinns}, namely, GFINNs.
We direct readers to refer to \cite{zhang2022gfinns} for more details.
The GFINNs comprise of four NNs -- $E_{\text{NN}}$, $S_{\text{NN}}$, $L_{\text{NN}}$ and $M_{\text{NN}}$ -- that represent NN approximations to $E$, $S$, $L$ and $M$ in \eqref{eq:GENERIC}, respectively. 
These NNs are designed to exactly satisfy the degeneracy conditions in \eqref{eq:GENERIC} while also being adequately expressive to capture the underlying dynamics from data.
The GFINN construction is based on the following theorem \cite{zhang2022gfinns}. 
\begin{theorem}[Lemma 3.6 of \cite{zhang2022gfinns}] \label{lemma:skewQ}
Let $A_j$ be a skew-symmetric matrix of size $n \times n$, $j = 1,\dots, m$ and $g(\cdot): \mathbb{R}^{n} \to \mathbb{R}$ a differentiable scalar function.
For the matrix valued function $Q_g(\cdot): \mathbb{R}^{n} \to \mathbb{R}^{m \times n}$ whose $j$-th row is defined as $(A_j \nabla g(\cdot))^\top$, we have $Q_g(\bfa{z}) \nabla g(\bfa{z}) = \bfa{0}$ for all $\bfa{z} \in \mathbb{R}^{n}$. 
\end{theorem}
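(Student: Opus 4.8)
The plan is to verify the identity entrywise, reducing it to the elementary fact that a skew-symmetric matrix induces the zero quadratic form. By the definition of $Q_g$, the $j$-th component of the vector $Q_g(\bfa{z})\nabla g(\bfa{z}) \in \mathbb{R}^m$ is the inner product of the $j$-th row of $Q_g(\bfa{z})$ with $\nabla g(\bfa{z})$. Writing $v := \nabla g(\bfa{z})$ for brevity, this reads
\begin{equation*}
\left[ Q_g(\bfa{z})\nabla g(\bfa{z}) \right]_j = \left( A_j v \right)^\top v = v^\top A_j^\top v .
\end{equation*}
So it suffices to show $v^\top A_j^\top v = 0$ for every $j$ and every $v \in \mathbb{R}^n$.

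The key step is that $v^\top A_j^\top v$ is a scalar, hence it equals its own transpose, and applying skew-symmetry $A_j^\top = -A_j$ gives
\begin{equation*}
v^\top A_j^\top v = \left( v^\top A_j^\top v \right)^\top = v^\top A_j v = -\,v^\top A_j^\top v ,
\end{equation*}
so that $2\, v^\top A_j^\top v = 0$, i.e. $v^\top A_j^\top v = 0$. Therefore every component of $Q_g(\bfa{z})\nabla g(\bfa{z})$ vanishes, and since $j \in \{1,\dots,m\}$ and $\bfa{z} \in \mathbb{R}^n$ were arbitrary, we conclude $Q_g(\bfa{z})\nabla g(\bfa{z}) = \bfa{0}$ for all $\bfa{z}$.

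There is essentially no obstacle here; the statement is a one-line consequence of the definition together with skew-symmetry. The only thing requiring a moment of care is bookkeeping the transposes correctly so that skew-symmetry is applied to the right factor (equivalently, one may note at the outset that $A_j^\top$ is itself skew-symmetric and invoke the zero-quadratic-form fact directly). Differentiability of $g$ enters only to ensure that $\nabla g(\bfa{z})$ is well-defined; no higher regularity of $g$ and no structural assumption on the $A_j$ beyond skew-symmetry is needed.
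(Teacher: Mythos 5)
Your proof is correct and is the standard argument: the $j$-th entry of $Q_g(\bfa{z})\nabla g(\bfa{z})$ is the quadratic form $v^\top A_j^\top v$ with $v=\nabla g(\bfa{z})$, which vanishes by skew-symmetry. The paper itself imports this statement from the cited reference without reproducing a proof, and your one-line derivation is exactly the intended one.
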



Let $S_{\textnormal{NN}}$ be a feed-forward NN
and consider the associated matrix-valued function $Q_{S_{\textnormal{NN}}}$ according to Theorem \ref{lemma:skewQ}.
The NN model $L_{\textnormal{NN}}$ for the Poisson matrix is then constructed by 
\begin{equation*}
\label{eq:ann_structure}
L_{\textnormal{NN}}(\bfa{z}) := (Q_{S_{\textnormal{NN}}}(\bfa{z}))^\top \, U_{\textnormal{NN}}(\bfa{z}) \, Q_{S_{\textnormal{NN}}}(\bfa{z}),
\end{equation*}
where $U_{\textnormal{NN}}(\bfa{z}): \mathbb{R}^{n} \to  \mathbb{R}^{m \times n}$ is a skew symmetric matrix-valued NN.
It can be readily checked that the degeneracy condition and the symmetry of the matrix functions are exactly satisfied by the above construction.
$M_{\text{NN}}$ and $E_{\text{NN}}$ are similarly built. 
Furthermore, such construction enjoys a universal approximation property as shown in \cite{zhang2022gfinns}.


\section{Thermodynamics-informed learning for latent space dynamics}
\label{sec:rom}
We propose the thermodynamics-informed latent dynamics identification (tLaSDI) method, which designs the latent space dynamics as an NN-based model that embeds the first and second laws of thermodynamics through the GENERIC formalism.

In particular, we propose to design the latent dynamics $F^r_{\mu}$ in the ROM \eqref{eq:reduced} 
from the GENERIC formalism. 
While some alternatives \cite{hernandez2021deep,lee2021machine} are equally applicable, we confine ourselves to GFINNs \cite{zhang2022gfinns} for the sake of simplicity of discussion. 
For latent manifold learning, 
we propose the hyper-autoencoder that combines the hypernetworks \cite{ha2017hypernetworks,lee2022hyperdeeponet} and the autoencoder to further improve the performance in the parametric cases.
That is, the hyper-encoder $\phi_{\text{e}}(\cdot;\bfa{\theta}_{\text{e}}(\mu))$ and the hyper-decoder $\phi_{\text{d}}(\cdot;\bfa{\theta}_{\text{d}}(\mu))$ are employed. 
The schematic description of the model for tLaSDI is illustrated in Figure~\ref{fig:flowchart_prediction}.

 \begin{figure}[ht]
 \centering
\includegraphics[scale=0.11]{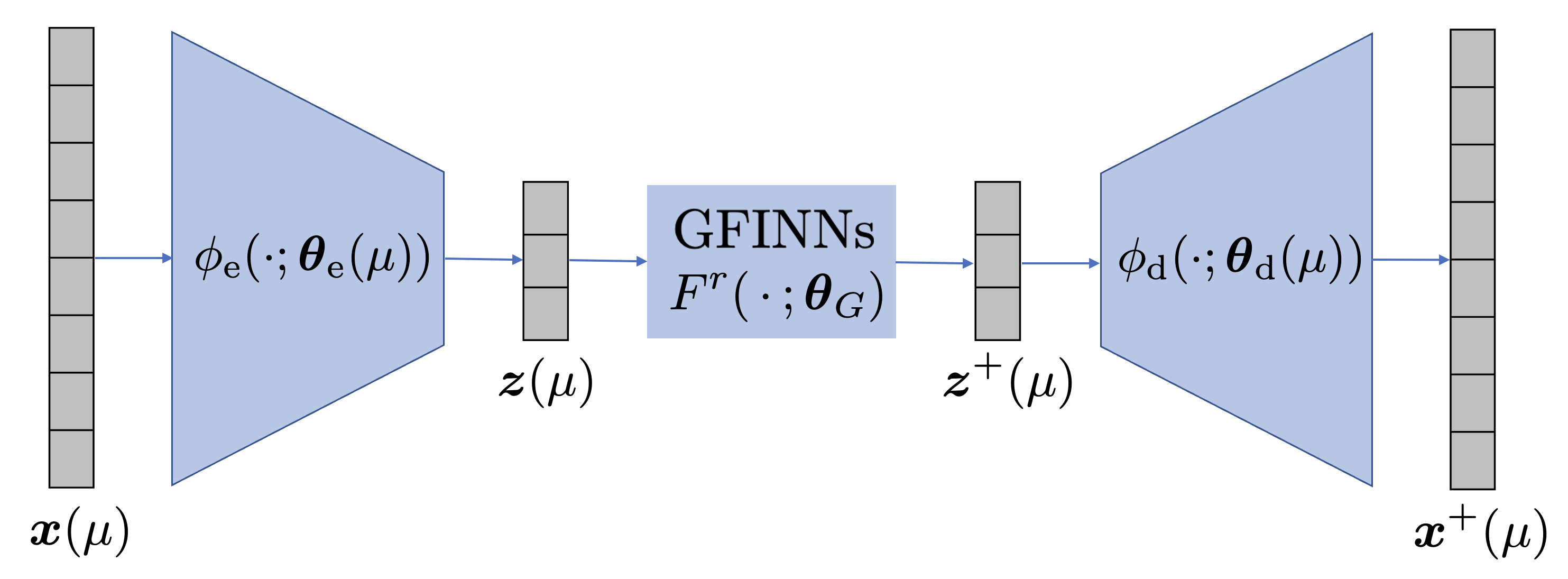}
\caption{The schematic NN model of tLaSDI. The hyper-autoencoder is used for the parametric dynamical systems. The latent dynamics are modelled by GFINNs.}
\label{fig:flowchart_prediction}
\end{figure}  

\begin{remark}
    \cite{hernandez2021structure} is the first work that proposed an NN-based model to encode the GENERIC formalism, namely, Structure-Preserving NN (SPNN). While SPNN aims to enforce the degeneracy conditions through an additional loss term that penalizes these conditions, in principle, it does not satisfy them exactly. 
    \cite{lee2021machine} proposed a model, namely, GNODE, which satisfies the degeneracy conditions exactly without any additional loss term, however, does not have universal approximation properties.
    The NN model of \cite{zhang2022gfinns}, namely, GFINNs is the one that comes with the exact degeneracy conditions and the universal approximation properties.
\end{remark}

\subsection{tLaSDI: Loss formulation}
\label{sec:training_objectives}

We present a novel loss formulation for tLaSDI, which is derived from an abstract error estimate presented in Section~\ref{sec:error_estimates}.

Let $\Gamma_{\text{train}}$ be the set of training parameters. 
For a given parameter $\mu \in \Gamma_{\text{train}}$,
let $\{\bfa{x}^k_\mu:=\bfa{x}_\mu(t_k)\}_{k\ge 0}$ be a collection of full-state trajectory data/snapshots.
Assuming the time step is sufficiently small, one may also obtain the (approximated) derivatives $\{\dot{\bfa{x}}^k_\mu\}_{k \ge0}$.
For example, the central difference formula gives 
$\dot{\bfa{x}}^k_\mu \approx (\bfa{x}^{k+1}_\mu -  \bfa{x}^{k-1}_\mu)/2\Delta t$.
While the approximation errors exist, for simplicity of notation, we denote the approximated derivatives by $\dot{\bfa{x}}^k_\mu$ with slight abuse of notation.

We then propose to train the tLaSDI to minimize the loss function defined by
\begin{equation}
\label{eq:loss_tLaSDI_GFINNs}
{\mathcal L}(\bfa{\theta}) = \lambda_{\text{int}}{\mathcal L}_{\text{int}}(\bfa{\theta}) + \lambda_{\text{rec}}{\mathcal L}_{\text{rec}}(\bfa{\theta}) +
\lambda_{\text{Jac}}{\mathcal L}_{\text{Jac}}(\bfa{\theta}) +\lambda_{\text{mod}}{\mathcal L}_{\text{mod}}(\bfa{\theta}),
\end{equation}
where $\bfa{\theta} = \{\bfa{\theta}_{\text{e}}, \bfa{\theta}_{\text{d}}, \bfa{\theta}_{\text{G}}\}$ is the collection of all the trainable parameters.
The loss function consists of four components
-- ${\mathcal L}_{\text{int}}, {\mathcal L}_{\text{rec}}, {\mathcal L}_{\text{Jac}}, {\mathcal L}_{\text{mod}}$ -- the subscripts stand for `\underline{int}egration', `\underline{rec}onstruction', `\underline{Jac}obian' and `\underline{mod}el', respectively.
The first two terms ${\mathcal L}_{\text{int}}, {\mathcal L}_{\text{rec}}$ are commonly employed in the context of the data-driven discovery of dynamical systems \cite{hernandez2021deep,vijayarangan2024data,moya2022physics} and the last term ${\mathcal L}_{\text{mod}}$ was introduced in \cite{champion2019data,he2023glasdi,conti2023reduced}.
The Jacobian loss term ${\mathcal L}_{\text{Jac}}$ is newly proposed in the present work.
The error estimate in Section~\ref{sec:error_estimates} provides a theoretical justification for the use of the loss function \eqref{eq:loss_tLaSDI_GFINNs}.

\begin{remark}
    The latter two loss terms utilize the derivative information. When the derivative of data is not available, while the fourth term shall be dropped, the third term can be revised to avoid using the derivatives. See the details below.
\end{remark}


{\bf The integration loss} is a commonly employed term for learning problems in dynamical systems. 
It measures the discrepancy between the latent state from the encoder at $\bfa{x}^{k+1}_\mu$ and the corresponding one-time step integration of the latent dynamics starting at the latent state from the encoder at the previous time $\bfa{x}^{k}_\mu$. That is, the term enforces 
\begin{align*}
    \phi_{\text{e}}(\bfa{x}^{k+1}_\mu)
    \approx  \phi_{\text{e}}(\bfa{x}^{k}_\mu) + \int_{t_k}^{t_{k+1}} F_{\mu}^r(\bfa{z}_\mu(t),t) dt.
\end{align*}
By summing it over all the data, the integral loss is defined by 
\begin{equation*}
{\mathcal L}_{\text{int}} = 
\sum\limits_{\mu \in \Gamma_{\text{train}}} \sum\limits_{k} \left\|
\phi_{\text{e}}(\bfa{x}^{k+1}_\mu) -\phi_{\text{e}}(\bfa{x}^{k}_\mu) - \int_{t_k}^{t_{k+1}} F_{\mu}^r(\bfa{z}_\mu(t),t) dt
\right\|_2^2,
\end{equation*}
where 
$t_k$ represents the time after $k$-th time steps
and the integral is approximated by employing a numerical integrator (e.g., Runge-Kutta methods).
It is worth noting that the integration loss is independent of the decoder.

{\bf The reconstruction loss} is another standard loss term when it comes to training autoencoder. 
It measures the discrepancy between the full-state data and the autoencoder reconstruction. 
The term particularly takes into account the autoencoder's ability to reconstruct the high-dimensional original data and is defined by
\begin{align*}
    {\mathcal L}_\recon =  
\sum\limits_{\mu \in \Gamma_{\text{train}}} \sum\limits_{k} 
\norm{
\bfa{x}^k_\mu -
\phi_{\text{d}}\circ \phi_{\text{e}}(\bfa{x}^{k}_\mu)}_2^2.
\end{align*}

{\bf The Jacobian loss} is the term derived from the abstract error estimates of the ROM approximation (Section~\ref{sec:error_estimates}).
The term requires the (approximated) derivative data of the full-state dynamics and measures the time derivative of the reconstruction error, i.e., 
\begin{align*}
    \frac{d}{dt}\left(\bfa{x}^k_\mu -
\phi_{\text{d}}\circ \phi_{\text{e}}(\bfa{x}^{k}_\mu)\right) = \left(I - J(\bfa{x}^k_\mu)\right)\dot{\bfa{x}}^k_\mu,
\end{align*}
where  $J(\bfa{x}^k_\mu) := J_\textnormal{d}(\phi_{\text{e}}(\bfa{x}^{k}_\mu)) J_\textnormal{e}(\bfa{x}^k_\mu)$ is the Jacobian of the autoencoder and $I$ is the identity matrix.
The Jacobian loss is then given by
\begin{equation} 
    {\mathcal L}_{\text{Jac}} = \sum\limits_{\mu \in \Gamma_{\text{train}}}\sum\limits_{k}
\norm{
\left(I - J(\bfa{x}^k_\mu)\right)\dot{\bfa{x}}^k_\mu}_2^2.
\end{equation}
While the loss term may be viewed as a heuristic Sobolev-type loss for the reconstruction loss (which introduces a high-order derivative in the loss), our motivation for introducing the Jacobian loss stems from Theorem \ref{thm:total_err} as an effort to minimize an upper bound of the error.

In the case where accurate derivative data are not available, one may revise the Jacobian loss by observing
\begin{equation*}
    \norm{
\left(I - J(\bfa{x}^k_\mu)\right)\dot{\bfa{x}}^k_\mu}_2
\le \norm{
I - J(\bfa{x}^k_\mu)}_2 \cdot \|\dot{\bfa{x}}^k_\mu\|_2
\le \norm{
I - J(\bfa{x}^k_\mu)}_F \cdot \|\dot{\bfa{x}}^k_\mu\|_2,
\end{equation*}
where $\|\cdot\|_F$ is the Frobenius norm.
The revised Jacobian loss may be given by 
\begin{align*}
    {\mathcal L}_{\text{Jac}} = \sum\limits_{\mu \in \Gamma_{\text{train}}}\sum\limits_{k}
\norm{
I - J(\bfa{x}^k_\mu)}_F^2,
\end{align*}
which does not require the derivative information.

{\bf The model loss} may be viewed as a complement term to the Jacobian loss. Since the dimension of the latent variables is smaller than the full-state dimension, the Jacobian of the autoencoder can never be a full rank, which implies $J(\cdot) \ne I$. 
Therefore, one may design another loss from the following observation.
\begin{align*}
    \|\left(I - J(\bfa{x}^k_\mu)\right)\dot{\bfa{x}}^k_\mu \|_2
    &\le \norm{
\dot{\bfa{x}}^k_\mu- 
J_\textnormal{d}(\phi_{\text{e}}(\bfa{x}^{k}_\mu))\, F_{\mu}^r \circ \phi_{\text{e}}(\bfa{x}^{k}_\mu)
}_2 \\
&\quad+ 
\|J_\textnormal{d}(\phi_{\text{e}}(\bfa{x}^{k}_\mu))\| \cdot 
\norm{J_\textnormal{e}(\bfa{x}^k_\mu) \dot{\bfa{x}}^k_\mu - 
F_{\mu}^r \circ \phi_{\text{e}}(\bfa{x}^{k}_\mu)
}_2.
\end{align*}

The first term on the right-hand side measures the modeling error of the full-order dynamics. Ideally, since $\dot{\bfa{x}}_\mu^k= F(\bfa{x}_\mu^k)$, the corresponding NN construction via the latent space dynamics is 
$J_\textnormal{d}(\phi_{\text{e}}(\bfa{x}^{k}_\mu))\, F_{\mu}^r \circ \phi_{\text{e}}(\bfa{x}^{k}_\mu)$.
This means that the first term measures how well the NN construction approximates the underlying full-order dynamics through the derivative data.

The second term on the right-hand side involves with $\norm{J_\textnormal{e}(\bfa{x}^k_\mu) \dot{\bfa{x}}^k_\mu - 
F_{\mu}^r \circ \phi_{\text{e}}(\bfa{x}^{k}_\mu)
}_2$, which can be interpreted as the modeling error of the latent dynamics.
If the encoder were given, $J_\textnormal{e}(\bfa{x}^k_\mu) \dot{\bfa{x}}^k_\mu$ describes the underlying latent space dynamics derived from the full-order dynamics $\dot{\bfa{x}}_\mu^k= F(\bfa{x}_\mu^k)$.
Since $F_{\mu}^r \circ \phi_{\text{e}}(\bfa{x}^{k}_\mu)$ represents our model for the latent dynamics, the discrepancy between the two terms measures the approximation (or modeling) error of the latent space dynamics via $F_\mu^r$. 
Altogether leads to the model loss defined by
\begin{equation*}
\begin{split}
{\mathcal L}_{\text{mod}} = \sum\limits_{\mu \in \Gamma_{\text{train}}} \sum\limits_{k}
\norm{J_\textnormal{e}(\bfa{x}^k_\mu) \, \dot{\bfa{x}}^k_\mu - F_{\mu}^r\circ \phi_{\text{e}}(\bfa{x}^{k}_\mu)}_2^2
+  \norm{
\dot{\bfa{x}}^k_\mu- 
J_\textnormal{d}(\phi_{\text{e}}(\bfa{x}^{k}_\mu))\, F_{\mu}^r \circ \phi_{\text{e}}(\bfa{x}^{k}_\mu)
}_2^2.
\end{split}
\end{equation*}

Lastly, we note that, unlike the Jacobian loss, the model loss requires one to have the derivative information.

The Jacobian and model loss terms require the computation of Jacobian matrices, which can be efficiently evaluated by leveraging the Jacobian-vector product (JVP) feature provided by PyTorch \cite{paszke2019pytorch} or JAX \cite{bradbury2018jax}. A simple Pytorch snippet for JVP is presented to illustrate its simplicity:
\begin{verbatim}
        J_e = torch.autograd.functional.jvp(encoder,x,dx)[1]
        J_d = torch.autograd.functional.jvp(decoder,z,dz)[1]
\end{verbatim}

The presented framework trains the encoder, decoder, and latent dynamics simultaneously to minimize the loss function \eqref{eq:loss_tLaSDI_GFINNs}.
    This simultaneous training has been used in the literature \cite{champion2019data,vijayarangan2024data,he2023glasdi,bonneville2024gplasdi,chen2023constructing}.
    Since tLaSDI imposes a thermodynamic structure on the latent state dynamics, this comes as a natural choice because the latent variables found by the autoencoder shall be compatible with the imposed structure (thermodynamics).
    We, however, acknowledge other works (e.g. \cite{hernandez2021deep,moya2022physics,fries2022lasdi}) that train the autoencoder separately as a data preprocessing and then learn the latent dynamics from the fixed latent states.
    Empirically, we found that tLaSDI performs significantly better when it is trained simultaneously. 
    This may indicate that the imposed underlying structure on the latent dynamics may play a key role in this regard, which facilitates the interaction between the autoencoder and the NN-based latent dynamics.

\section{Abstract error estimate}
\label{sec:error_estimates}
We present an abstract error estimate, which reveals all the components attributing the total errors of the ROM approximation.
For ease of discussion, without loss of generality, we suppress all the parametric dependencies.

Let $\bfa{x}(t)$ be the solution to the full-order model of $\dot{\bfa{x}} = F(\bfa{x})$.
We are interested in estimating the ROM approximation error, i.e.,
\begin{align*}
    e(t;t_0) := \bfa{x}(t) - \phi_{\text{d}}
    \left( \phi_{\text{e}}(\bfa{x}(t_0)) + 
    \int_{t_0}^{t} F^r(\bfa{z}(s)) ds \right),
\end{align*}
where $\bfa{z}$ is the solution of the latent space dynamics \eqref{eq:reduced} satisfying 
$\bfa{z}(t_0) = \phi_{\text{e}}(\bfa{x}(t_0))$.

\begin{theorem} \label{thm:total_err}
Let $\bfa{x}$ be the solution to the full-order model $\dot{\bfa{x}} = F(\bfa{x})$,
and let $\bfa{z}$ be the solution to the latent dynamics of $\dot{\bfa{z}} = F^r(\bfa{z})$ with 
$\bfa{z}(t_0) = \phi_\text{e}(\bfa{x}(t_0))$.
Suppose the Jacobian of the decoder $\phi_\text{d}$ is Lipschitz continuous and bounded, and $F^r$ is bounded.
For any $t > t_0$, the ROM error is bounded by 
$$
\| \err(t;t_0) \| \lesssim 
\varepsilon_\inte(t;t_0) +
\varepsilon_\recon(t;t_0) +
\varepsilon_\Jac(t;t_0) +
\varepsilon_\textup{mod}(t;t_0),
$$
where each error term is defined by
\begin{equation*}
\begin{split}
\varepsilon_\inte(t;t_0) &= \int_{t_0}^t \|\phi_\textnormal{e}(\fullstate) - \bfa{z}\| ds,\\
\varepsilon_\recon(t;t_0) & = \| \fullstate(t_0) - (\phi_{\textnormal{d}} \circ \phi_{\textnormal{e}})(\fullstate(t_0)) \| +  \| \fullstate(t) - (\phi_{\textnormal{d}} \circ \phi_{\textnormal{e}})(\fullstate(t)) \|, \\
\varepsilon_\Jac(t;t_0) &= \int_{t_0}^t \|(I - J(\fullstate(s))) \dot{\fullstate}(s) \| ds, \\
\varepsilon_\textup{mod}(t;t_0) & = \int_{t_0}^t \bigg(\| J_\textnormal{e}(\fullstate)\dot{\fullstate}  - F^r(\bfa{z}) \| + \|\dot{\fullstate} - J_\textnormal{d}(\bfa{z})F^r(\bfa{z}) \| \bigg) ds,
\end{split}
\end{equation*}
and $\lesssim$ hides unimportant constants.
\end{theorem}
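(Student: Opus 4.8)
The plan is to introduce a single intermediate trajectory — the reconstruction $(\phi_{\text{d}} \circ \phi_{\text{e}})(\bfa{x}(t))$ of the exact full-order solution — and split the ROM error into a ``reconstruction part'' and a ``latent-dynamics part,'' bounding each by two of the four terms. The starting point is the observation that, since $\bfa{z}$ solves $\dot{\bfa{z}} = F^r(\bfa{z})$ with $\bfa{z}(t_0) = \phi_{\text{e}}(\bfa{x}(t_0))$, the fundamental theorem of calculus gives $\phi_{\text{e}}(\bfa{x}(t_0)) + \int_{t_0}^t F^r(\bfa{z}(s))\,ds = \bfa{z}(t)$, hence $e(t;t_0) = \bfa{x}(t) - \phi_{\text{d}}(\bfa{z}(t))$. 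I would then write
\[
e(t;t_0) = \big(\bfa{x}(t) - (\phi_{\text{d}} \circ \phi_{\text{e}})(\bfa{x}(t))\big) + \big((\phi_{\text{d}} \circ \phi_{\text{e}})(\bfa{x}(t)) - \phi_{\text{d}}(\bfa{z}(t))\big) =: \mathrm{(I)} + \mathrm{(II)},
\]
and treat the two summands independently.

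For $\mathrm{(I)}$, the key is the identity $\frac{d}{ds}\big(\bfa{x}(s) - (\phi_{\text{d}} \circ \phi_{\text{e}})(\bfa{x}(s))\big) = (I - J(\bfa{x}(s)))\dot{\bfa{x}}(s)$ already recorded in Section~\ref{sec:training_objectives} (chain rule, with $J(\cdot) = J_{\text{d}}(\phi_{\text{e}}(\cdot)) J_{\text{e}}(\cdot)$). Integrating this identity from $t_0$ to $t$, moving the value at $t_0$ to the right-hand side, and applying the triangle inequality gives $\|\mathrm{(I)}\| \le \|\bfa{x}(t_0) - (\phi_{\text{d}} \circ \phi_{\text{e}})(\bfa{x}(t_0))\| + \int_{t_0}^t \|(I - J(\bfa{x}(s)))\dot{\bfa{x}}(s)\|\,ds$, and the two terms on the right are (the first summand of) $\varepsilon_{\text{rec}}(t;t_0)$ and exactly $\varepsilon_{\text{Jac}}(t;t_0)$.

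For $\mathrm{(II)}$, I would use that the two curves $s \mapsto (\phi_{\text{d}} \circ \phi_{\text{e}})(\bfa{x}(s))$ and $s \mapsto \phi_{\text{d}}(\bfa{z}(s))$ coincide at $s = t_0$ (again because $\bfa{z}(t_0) = \phi_{\text{e}}(\bfa{x}(t_0))$), so by the fundamental theorem of calculus $\mathrm{(II)} = \int_{t_0}^t \big(J_{\text{d}}(\phi_{\text{e}}(\bfa{x}(s)))J_{\text{e}}(\bfa{x}(s))\dot{\bfa{x}}(s) - J_{\text{d}}(\bfa{z}(s))F^r(\bfa{z}(s))\big)\,ds$. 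Adding and subtracting $J_{\text{d}}(\phi_{\text{e}}(\bfa{x}(s)))F^r(\bfa{z}(s))$ inside the integrand produces two pieces: first, $J_{\text{d}}(\phi_{\text{e}}(\bfa{x}(s)))\big(J_{\text{e}}(\bfa{x}(s))\dot{\bfa{x}}(s) - F^r(\bfa{z}(s))\big)$, which the boundedness of $J_{\text{d}}$ controls by a constant multiple of the first integrand of $\varepsilon_{\text{mod}}$; and second, $\big(J_{\text{d}}(\phi_{\text{e}}(\bfa{x}(s))) - J_{\text{d}}(\bfa{z}(s))\big)F^r(\bfa{z}(s))$, which the Lipschitz continuity of $J_{\text{d}}$ together with the boundedness of $F^r$ controls by a constant multiple of $\|\phi_{\text{e}}(\bfa{x}(s)) - \bfa{z}(s)\|$, whose integral is $\varepsilon_{\text{int}}(t;t_0)$. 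Summing gives $\|\mathrm{(II)}\| \lesssim \varepsilon_{\text{int}}(t;t_0) + \varepsilon_{\text{mod}}(t;t_0)$, and combining with the bound on $\mathrm{(I)}$ yields the claim after absorbing the constants ($\sup\|J_{\text{d}}\|$, $\mathrm{Lip}(J_{\text{d}})\sup\|F^r\|$) into $\lesssim$.

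I expect the one delicate point to be the bookkeeping in $\mathrm{(II)}$: the add-and-subtract quantity $J_{\text{d}}(\phi_{\text{e}}(\bfa{x}(s)))F^r(\bfa{z}(s))$ must be chosen precisely so that (a) only the stated hypotheses — bounded and Lipschitz $J_{\text{d}}$, bounded $F^r$ — are needed, and (b) the mismatch that survives after the triangle inequality is $\|\phi_{\text{e}}(\bfa{x}(s)) - \bfa{z}(s)\|$, which is exactly the integrand of $\varepsilon_{\text{int}}$, rather than some other discrepancy; everything else reduces to the fundamental theorem of calculus, the chain rule, and the triangle inequality, with no further subtlety.
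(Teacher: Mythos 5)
Your proof is correct and follows essentially the same route as the paper's: the same intermediate trajectory $(\phi_{\text{d}}\circ\phi_{\text{e}})(\bfa{x}(t))$, the same two-part splitting, and the same add-and-subtract of $J_{\text{d}}(\phi_{\text{e}}(\bfa{x}))F^r(\bfa{z})$ handled by the boundedness and Lipschitz continuity of $J_{\text{d}}$ and the boundedness of $F^r$. The only cosmetic difference is that the paper uses a convex-combination trick to additionally surface the $\|\dot{\bfa{x}} - J_{\text{d}}(\bfa{z})F^r(\bfa{z})\|$ part of $\varepsilon_{\text{mod}}$ and the time-$t$ reconstruction error in the bound, whereas your argument yields a slightly tighter estimate that is still dominated by the stated right-hand side.
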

\begin{proof}
Let $\fullstateReconstruct(t) = (\dec \circ \enc)(\fullstate(t))$, which can be interpreted as the reconstructed full-state dynamics from the autoencoder. 
Since $\dot{\fullstate}(t) = F(\fullstate(t))$, it follows from the chain rule that the dynamics of $\dot{\fullstate}$ is described by
\begin{equation*}\label{eq:fom-approx}
\begin{split}
\dot{\fullstateReconstruct} = J_\textnormal{d}(\enc(\fullstate)) \ J_\text{e}(\fullstate)  \ F(\fullstate) \quad \forall t > 0, \quad 
\fullstateReconstruct(t_0) = (\dec \circ \enc)(\fullstate(t_0)). 
\end{split} 
\end{equation*}
Since the dynamics of $\fullstateReconstruct$ require the original full-state dynamics $F$, it is not available in practice. Yet, this serves as an idealized model to be learned for the ROM.
Let $e_\text{ideal}(t):=  \fullstate(t) - \fullstateReconstruct(t) $.
It then can be checked that 
the ideal error $e_\text{ideal}$ is governed by 
\begin{equation*}\label{eq:err-fom}
\begin{split}
\dot{e}_\text{ideal} = (I - J(\fullstate)) F(\fullstate) \quad \forall t, \quad 
e_\text{ideal}(t_0) = \fullstate(t_0) - (\dec \circ \enc)(\fullstate(t_0)). 
\end{split} 
\end{equation*}
By integrating over time, we obtain 
$e_\text{ideal}(t) = e_\text{ideal}(t_0) + \int_{t_0}^t (I - J(\fullstate(s)))F(\fullstate(s)) ds$,
which gives 
\begin{equation*}
 \|e_\text{ideal}(t)\| \le \alpha  \left( \|e_\text{ideal}(t_0)\| + \int_{t_0}^t \|(I - J(\fullstate(s))) \dot{\fullstate}(s) \| ds \right)+ (1-\alpha) \|e_\text{ideal}(t)\|,
 \end{equation*}
for any $\alpha \in (0,1)$.


Let $\fullstateApprox(t) = \phi_\text{d} \left(\phi_\text{e}(\fullstate(t_0)) + \int_{t_0}^t F^r(\bfa{z}(s)) ds\right)$
where $\bfa{z}$ is the solution to $\dot{\bfa{z}} = F^r(\bfa{z})$ with $\bfa{z}(t_0) = \phi_\text{e}(\fullstate(t_0))$.
We note that $\fullstateApprox$ is the one constructed from the ROM framework that uses both the autoencoder and the latent dynamics.
Let $e_{\text{ROM}}(t) = \fullstateApprox(t) - \fullstateReconstruct(t) $ be the error between the ROM reconstruction and the ideal reconstruction. 
It then can be checked that the error is governed by 
\begin{align*}
    \dot{e}_\text{ROM} = J(\fullstate) F(\fullstate) - J_\text{d}(\bfa{z})F^r(\bfa{z}) \quad \forall t, \quad e_\text{ROM}(t_0) = 0.
\end{align*}
For any $\alpha \in (0,1)$, since 
\begin{align*}
    \| J(\fullstate)F(\fullstate)  - J_\text{d}(\bfa{z})F^r(\bfa{z}) \| 
    &\le 
    \alpha \|J_\text{d}(\enc(\fullstate))\|\cdot \| J_\text{e}(\fullstate) F(\fullstate)  - F^r(\bfa{z}) \| \\
    &\quad+ 
    \alpha \| J_\textnormal{d}(\enc(\fullstate)) - J_\textnormal{d}(\bfa{z})\|\cdot \|F^r(\bfa{z})\| \\
    &\quad 
    +(1-\alpha)\|(I-J(\fullstate))F(\fullstate)\|
    \\
    &\quad
    +(1-\alpha)\|F(\fullstate) - J_\text{d}(\bfa{z})F^r(\bfa{z}) \|,
\end{align*}
it follows from the Lipschitz continuities of $\phi_\text{d}$, $J_\text{d}$ and the boundedness of $F^r$ that 
\begin{align*}
    \|e_\text{ROM}(t)\| &\lesssim 
    \int_{t_0}^t\| J_\text{e}(\fullstate) F(\fullstate)  - F^r(\bfa{z}) \| ds +
    \int_{t_0}^t \|\phi_\text{e}(\fullstate) - \bfa{z}\| ds
    \\
    &\quad 
    + \int_{t_0}^t \|(I-J(\fullstate))F(\fullstate)\| +\|F(\fullstate) - J_\text{d}(\bfa{z})F^r(\bfa{z}) \| ds.
\end{align*}
Therefore, by combining the above estimates, we obtain
\begin{align*}
    \|\bfa{x}(t) - \fullstateApprox(t)\|
    &\lesssim 
    \int_{t_0}^t \|\phi_\text{e}(\fullstate) - \bfa{z}\| ds + \|e_\text{ideal}(0)\| + \|e_\text{ideal}(t)\|\\
    &\quad+ \int_{t_0}^t \|(I - J(\fullstate(s))) \dot{\fullstate}(s) \| ds
    \\
    &\quad 
    + \int_{t_0}^t \bigg(\| J_\text{e}(\fullstate) F(\fullstate)  - F^r(\bfa{z}) \| + \|F(\fullstate) - J_\text{d}(\bfa{z})F^r(\bfa{z}) \| \bigg) ds,
\end{align*}
which completes the proof.
\end{proof}

For a small time interval $[t_k,t_{k+1}]$, 
if $\bfa{z}$ satisfies $\bfa{z}(t_k) = \phi_\text{e}(\bfa{x}(t_k))$, we have
\begin{align*}
    \varepsilon_\inte(t_{k+1};t_k) &= \int_{t_k}^{t_{k+1}} \|\phi_\text{e}(\fullstate) - \bfa{z}\| ds
    \approx \frac{\Delta t}{2} \cdot \|\phi_\text{e}(\fullstate(t_{k+1})) - \bfa{z}(t_{k+1})\|,
\end{align*}
where the integral is approximated by the trapezoidal rule. 
Since $\dot{\bfa{z}} = F^r(\bfa{z})$, 
it can be checked that 
\begin{align*}
    \varepsilon_\inte(t_{k+1};t_k) \approx 
    \frac{\Delta t}{2} \cdot
    \left\|\phi_\text{e}(\fullstate(t_{k+1})) - \phi_\text{e}(\fullstate(t_{k})) - \int_{t_k}^{t_{k+1}} F^r(\bfa{z}(s)) ds \right\|,
\end{align*}
and from which, one can discover the integration loss $\mathcal{L}_\text{int}$.
By following a similar argument, all the loss terms of \eqref{eq:loss_tLaSDI_GFINNs} can be discovered from the corresponding error components of Theorem~\ref{thm:total_err}.

Since the result of Theorem~\ref{thm:total_err} is general and abstract, it could be used for estimating the error of any ROM approximation.
For example, suppose $F(\bfa{x}) = M\bfa{x}$ is linear where $M = [Q,\tilde{Q}]\begin{bmatrix}
    \Sigma & \\ & \tilde{\Sigma}
\end{bmatrix}\begin{bmatrix}
    Q^\top \\ \tilde{Q}^\top
\end{bmatrix}$ is a spectral decomposition, and the encoder and decoder are also linear, say, $\enc(\bfa{x}) = Q^\top \bfa{x}$
and $\dec(\bfa{z}) = Q\bfa{z}$ with $Q^\top Q = I$.
Since the projection-based LS-ROM uses $F^r(\bfa{z}) = Q^\top M Q\bfa{z} = \Sigma \bfa{z}$ as the latent dynamics,  
it can be checked that 
\begin{align*}
    \phi_\text{e}(\bfa{x}) - \bfa{z} &= Q^\top \bfa{x} - \bfa{z}, \\
    (I - J(\bfa{x}))\dot{\bfa{x}} &= (I - QQ^\top)M\bfa{x}= \tilde{Q}\tilde{\Sigma}\tilde{Q}^\top \bfa{x}, \\
    F(\bfa{x}) - J_\text{d}(\bfa{z})F^r(\bfa{z})
    &= Q\Sigma (Q^\top \bfa{x} - \bfa{z}) + \tilde{Q}\tilde{\Sigma}\tilde{Q}^\top \bfa{x},
    \\
    J_\text{e}(\bfa{x})F(\bfa{x}) - F^r(\bfa{z}) &= \Sigma (Q^\top \bfa{x} - \bfa{z}).
\end{align*}
Since $Q^\top \bfa{x} = \bfa{z}$ as it matches the initial condition, the error estimate of Theorem~\ref{thm:total_err} yields 
\begin{align*}
    \|e(t;t_0)\| \lesssim \|\tilde{Q}\tilde{Q}^\top\bfa{x}(t_0)\| + \int_{t_0}^t \|\tilde{Q}\tilde{\Sigma}\tilde{Q}^\top \bfa{x}\| ds.
\end{align*}

The error estimate provides a theoretical foundation for the proposed loss function \eqref{eq:loss_tLaSDI_GFINNs} and explains the role of each loss component.
Some loss terms were introduced in \cite{champion2019data,he2023glasdi} in a heuristic way, while 
Theorem~\ref{thm:total_err} provides a solid theoretical justification for them.
The advantage of the new loss will be demonstrated through numerical experiments in Section~\ref{sec:num}.

\section{Numerical examples}
\label{sec:num}
This section presents numerical examples to demonstrate the effectiveness of tLaSDI. 
Two learning tasks are considered. One is the data-driven discovery of dynamical systems to predict the trajectories in future times (extrapolation). The other task focuses on the parametric PDE problems.

All the implementations were done on a Livermore Computing Lassen system's NVIDIA V100 (Volta) GPU, located at the Lawrence Livermore National Laboratory. This GPU is equipped with 3,168 NVIDIA CUDA Cores and possesses 64 GB of GDDR5 GPU Memory. 
The source codes are written in the open-source PyTorch \cite{paszke2019pytorch}. The codes will be published in a GitHub page. 

\subsection{Data-driven discovery of dynamical systems: Extrapolation}
For the data-driven discovery of dynamics, we consider the non-parametric full-order dynamics, which corresponds to $|\Omega_\mu|=1$.
The extrapolation predicts the full-state solution at times outside of the training range. 
For numerical experiments, we divide the given trajectory data of the time range $[0,T+\delta]$ into two.
One contains the trajectory of $[0,T]$, which is used for training.
The other contains the remaining one of $(T,T+\delta]$, which is used for testing.
The extrapolation accuracy is measured by the averaged relative ${\ell}_2$ error over time: 
\beq
\label{eq:rel_l2_nonpara}
e^{{\ell}_2}_{\bfa{x}} = \frac{1}{\vert \kappa_{\text{test}} \vert}\sum\limits_{k \in \kappa_{\text{test}}}\left(\frac{\norm{\bfa{x}^k-\widetilde{\bfa{x}}^k}_{2}}{\norm{\bfa{x}^k}_{2}}\right),
\eeq
where $\kappa_{\text{test}} = \{k\ |\ t_k \in (T, T+\delta] \}$
and $\widetilde{\bfa{x}}^k$ is the ROM prediction at time $t_k$ computed with the initial condition of $x_\mu(T)$.
\eqref{eq:rel_l2_nonpara} is referred to as the extrapolation error.



tLaSDI is compared with two NM ROM methods. 
One is the Vanilla-FNN, which uses plain feed-forward NNs for the encoder, decoder, and latent dynamics and trains them on the standard loss function which contains the first two loss terms of \eqref{eq:loss_tLaSDI_GFINNs}.
TA-ROM \cite{hernandez2021deep} uses the sparse autoencoder and then models the latent dynamics using SPNN \cite{hernandez2021deep,hernandez2021structure}. This method follows the separate training schemes and uses the standard loss function.
Other details of Vanilla-FNN and TA-ROM can be found in Appendix~\ref{app:overview-other-method}.

NN architectures are chosen to be comparable with each other in terms of the number of NN parameters. 
\texttt{Adam} \cite{kingma2014adam} optimizer is employed throughout for the training.

\subsubsection{Couette flow of an Oldroyd-B fluid}
\label{sec:oldroyd}
The Couette flow of an Oldroyd-B fluid model describes viscoelastic fluids composed of linear elastic dumbbells representing polymer chains in a solvent.
The model involves four state variables for all $100$ nodes.
The state variables for the $i$-th node are the fluid's position on each mesh node ($q_i$), its velocity in the $x$-direction ($v_i$), internal energy ($e_i$), and the shear component ($\tau_i$) of the conformation tensor.

Following \cite{hernandez2021structure,hernandez2021deep}, we construct the full-order state by concatenating all the variables, i.e.,
$\bfa{x}(t) =  [\bfa{q}(t) \ \bfa{v}(t) \ \bfa{e}(t) \ \bfa{\tau}(t) ]^\top \in \mathbb{R}^{400}$,
where $\bfa{q} = [q_1 \dots q_{100}]$,
$\bfa{v} = [v_1  \dots v_{100}]$,
$\bfa{e} = [e_{1} \dots e_{100}]$,
and $\bfa{\tau} = [ \tau_{1} \dots \tau_{100}]$.
Consequently, the dimension of the full-order model is $400$. 
We set $T = 0.9$, $\delta = 0.1$ with the fixed time step of $\Delta t = \frac{1}{150}$.

In all the methods, the autoencoders contain $183K$ trainable parameters with the latent space dimension of $n=8$. 
For the latent space dynamics, Vanilla-FNN, TA-ROM, and tLaSDI consist of $143K$, $135K$, and $139K$ parameters, respectively.
All methods were trained for approximately $5000$ seconds of wall-clock time resulting in around $40K$ and $280K$ iterations for tLaSDI and Vanilla-FNN respectively. 
For TA-ROM, SAE and SPNN undergo training for $53K$ and $360K$ iterations respectively, leading to a total of $413K$ iterations. 
The number of iterations for SAE and SPNN of TA-ROM is adjusted to enhance the performance.
Other implementation details can be found in Appendix \ref{sec:app_VC}.


Figure~\ref{fig:VC_loss_figures1} shows the training loss and extrapolation error trajectories by the three different methods with respect to the wall-clock time in seconds.
We run 10 independent simulations and report all the training trajectories on the left.
It can be seen that the training losses are separated by each method. 
This is expected as each method has its own training loss and they scale differently.
Regardless, we see that all the losses saturate at the end of the training. 
Since TA-ROM sequentially trains SAE and SPNN in that order, the part before the black vertical dashed line corresponds to the SAE training and the rest falls into the SPNN training.
It is seen in the latent dynamics training for TA-ROM that the training loss trajectories are significantly different by several orders of magnitude,
which causes a large variance in extrapolation errors.
On the right, we report 
the extrapolation errors \eqref{eq:rel_l2_nonpara} by each method with respect to the wall time.
The means of the 10 simulations are shown as solid lines and the shaded areas correspond to one standard deviation away from the mean.
It is seen that Vanilla-FNN yields a rapid decay in the extrapolation error during the initial phase, however, it saturates quickly and progresses marginally as the training goes on. 
It can be seen that TA-ROM yields the largest variance in the extrapolation error.
This is because of their unstable behavior in the training.
Note that for TA-ROM, the test error is available only after the SAE training is complete. 
On the other hand, tLaSDI gives the smallest extrapolation errors for all 10 simulations and yields the smallest variance.
This indicates the robust prediction ability of tLaSDI in extrapolation.




 \begin{figure}[!ht]
	\centering
 {\includegraphics[width=0.49\textwidth]{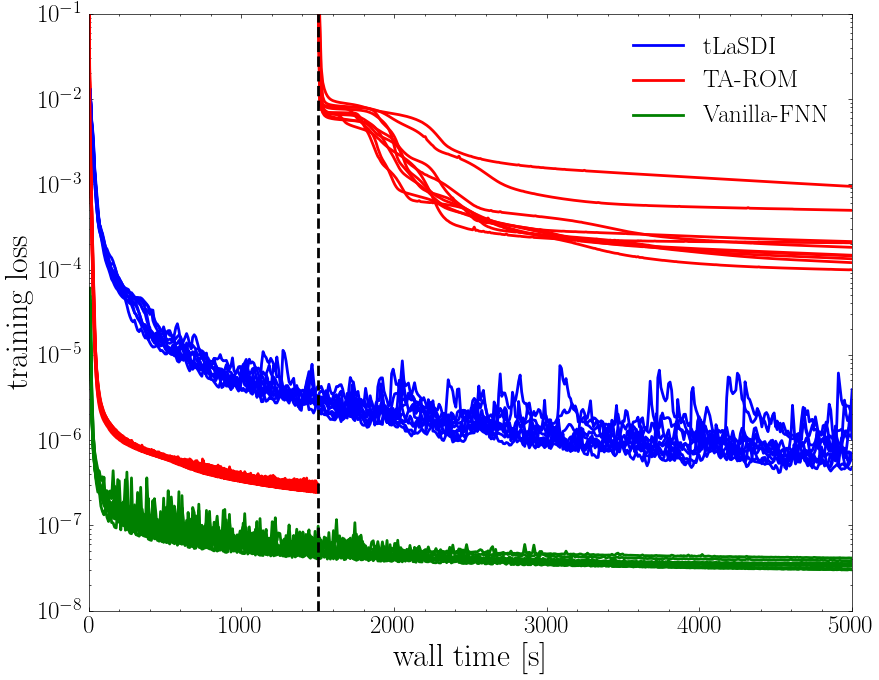}
 \includegraphics[width=0.49\textwidth]{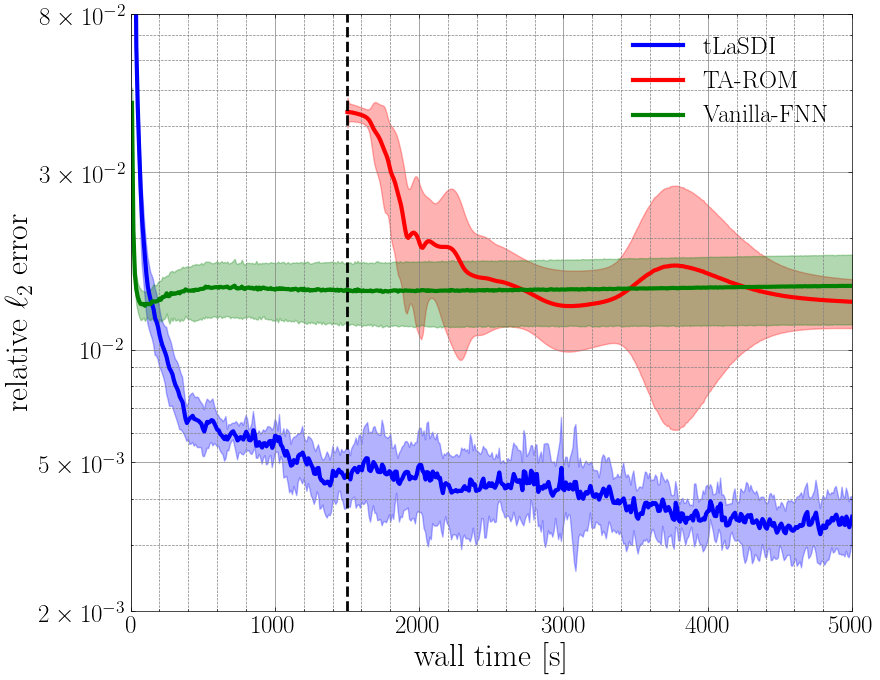}
 }
 \captionsetup{justification=justified}
 \caption{Example \ref{sec:oldroyd}. 
 Left: The training loss trajectories for 10 simulations versus the wall time by the three methods. Right: The mean and one standard deviation away from the mean of the extrapolation errors \eqref{eq:rel_l2_nonpara} versus the wall time. The vertical line indicates the time for TA-ROM trains SAE.}
 \label{fig:VC_loss_figures1}
 \end{figure}

In practice, the training loss is perhaps the only indicator for the users to decide whether the model is well-trained or not. Without additional information, one may choose the model that yields the smallest loss if multiple simulations are done.
In Figure~\ref{figure:var_4_nodes_tLaSDI_vs_TAROM_vs_VFNN}, we depict the prediction trajectories (extrapolation region) from the model with the smallest training loss among 10 simulations. 
The corresponding ground truth (GT) trajectories are shown in black dashed lines.
It is clearly observed that tLaSDI provides the most accurate extrapolation performance despite it being a non-intrusive ROM method. 
The extrapolation errors by tLaSDI, TA-ROM, and Vanilla-FNN are 
$2.47\times 10^{-3}$, $1.41\times 10^{-2}$ and $1.70\times 10^{-2}$, respectively.

 \begin{figure}[!ht]
	\centering
 {\includegraphics[width=\textwidth]{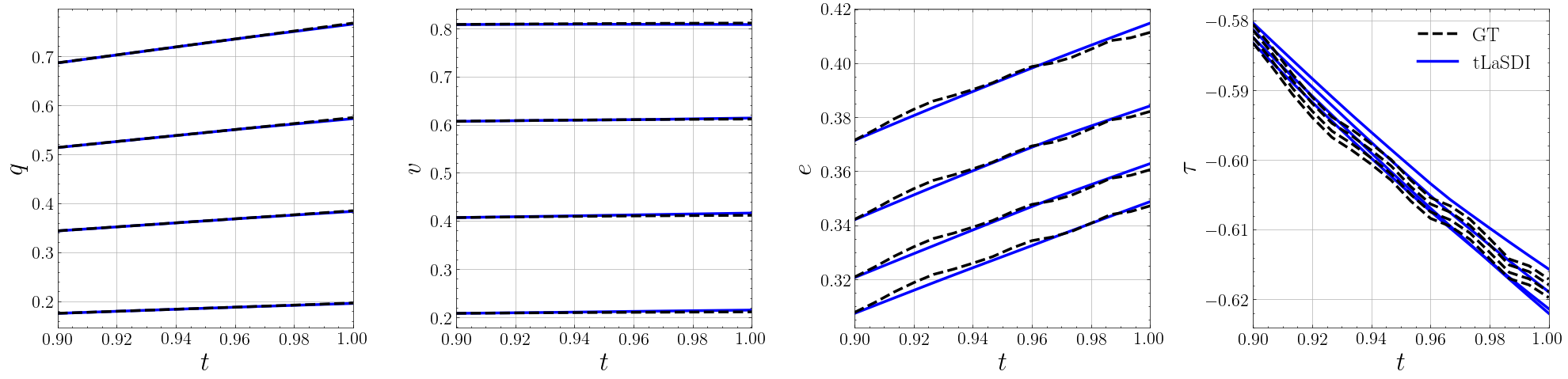}
 \includegraphics[width=\textwidth]{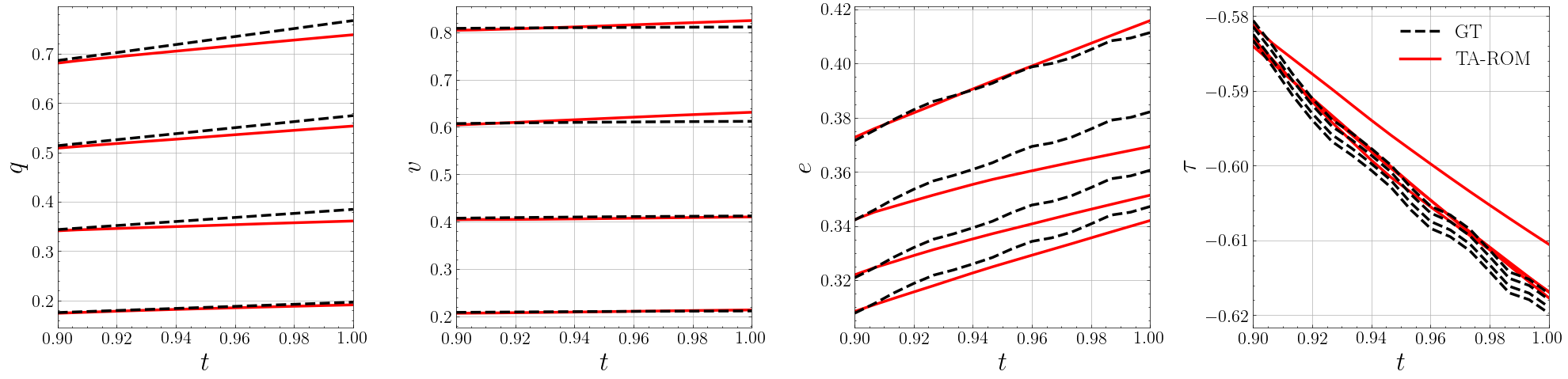}
  \includegraphics[width=\textwidth]{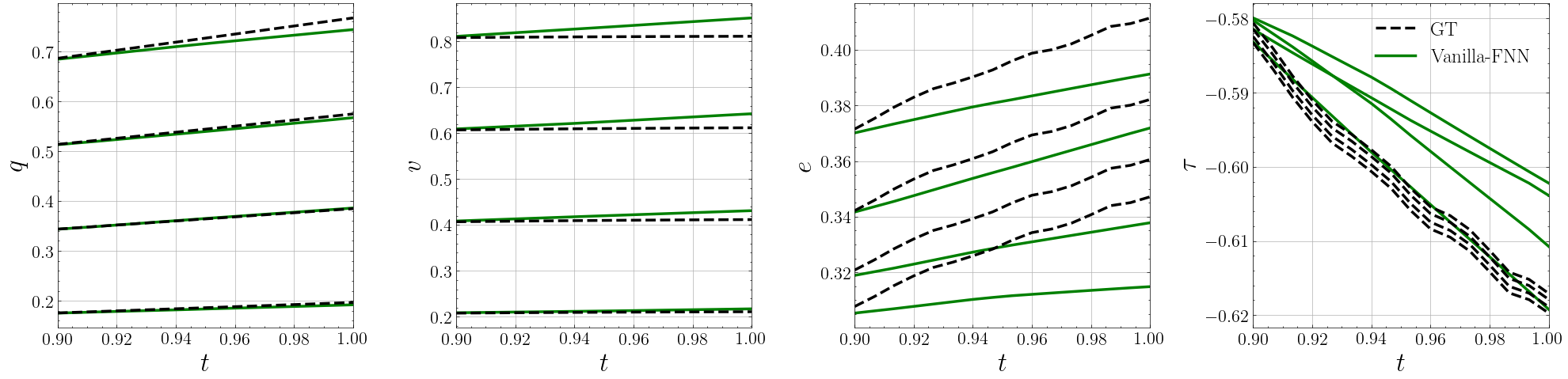}
 }
  \caption{Example \ref{sec:oldroyd}. Four different GT trajectories and the corresponding predictions by tLaSDI (top row), TA-ROM (middle row), and Vanilla-FNN (bottom row). Each method uses the model with the smallest loss from $10$ independent simulations.}
\label{figure:var_4_nodes_tLaSDI_vs_TAROM_vs_VFNN}
\end{figure}

We compare the effectiveness of the proposed loss formulation  \eqref{eq:loss_tLaSDI_GFINNs}.
In this regard, we consider four different loss configurations.
The first loss corresponds to the standard loss formulation where the first two terms ($\mathcal{L}_\text{int}, \mathcal{L}_\text{rec}$) of \eqref{eq:loss_tLaSDI_GFINNs} are used, which yields 
the extrapolation error of $1.19\times 10^{-2} \pm 2.57\times 10^{-3}$. 
The second loss is obtained by adding only $\mathcal{L}_\text{Jac}$ to the first one, which gives the error of $4.49\times 10^{-3} \pm 8.56\times 10^{-4}$.
The third loss is constructed from the first one by adding only the modeling loss $\mathcal{L}_\text{mod}$, which gives the error of $6.23\times 10^{-3} \pm 2.39\times 10^{-3}$.
The fourth is the one we propose \eqref{eq:loss_tLaSDI_GFINNs}, which gives the smallest extrapolation error of $3.57\times 10^{-3} \pm 4.81\times 10^{-4}$.
Altogether, it demonstrates the effectiveness of the proposed loss formulation.

\subsubsection{Two gas containers exchanging heat and volume}
\label{sec:gas_containers}
Two ideal gas containers, separated by a wall, are allowed to exchange heat and volume.
The dynamics of the wall are described by four variables: position ($q$) and momentum ($p$) of the moving wall, and the entropy of the gases in the two containers, ($S_1$, $S_2$). 
The evolution of the variables is then modeled by 
\begin{equation*}
\label{eq:GC}
\begin{pmatrix} \dot{q} \\ \dot{p} \\ \dot{S_1} \\\dot{S_2} \end{pmatrix}
 = \begin{pmatrix} p \\ \frac{2}{3}\left( \frac{E_1}{q}-\frac{E_2}{2-q}\right) \\ \frac{10}{T_1}\left( \frac{1}{T_1}-\frac{1}{T_2}\right) \\-\frac{10}{T_1}\left( \frac{1}{T_1}-\frac{1}{T_2}\right) \end{pmatrix}, 
 \quad 
 E_j = \frac{\exp\left(\frac{2S_j}{3}\right)}{(q+2(j-1)(1-q))^{2/3}},
\end{equation*}
where $T_j = \frac{\partial E_j}{\partial S_j}$,
$E_1$ and $E_2$ are the internal energy of the two containers given by \cite{schroeder1999introduction}.
This example is also considered in \cite{ottinger2005beyond,shang2020structure, zhang2022gfinns}.


Following \cite{hernandez2021deep}, we generate $100$ trajectories of different initial conditions that are randomly uniformly sampled from $[0.2,1.8]\times[-1,1]\times[1,3]\times[1,3]$.
If the variables for $i$-th trajectory are denoted by $q_i$, $p_i$, $S_{1,i}$, and $S_{2,i}$, we construct the full-state variable $\bfa{x}$ by concatenating them all, i.e., 
$\bfa{x}(t)  =  [\bfa{q}(t), \bfa{p}(t), \bfa{S}_1(t), \bfa{S}_2(t)]^\top \in \mathbb{R}^{400}$,
where $\bfa{q} = (q_i)$,
$\bfa{p} = (p_i)$,
$\bfa{S}_1 = (S_{1,i})$
and $\bfa{S}_2 = (S_{2,i})$.
Hence, the full-order dimension is $N=400$. 
We set $T=7.84$, $\delta=0.16$ with the fixed time step of $\Delta t = 0.02$.

In all the methods, we employ the autoencoders having $207K$ parameters with the latent space dimension of $n=30$.
The numbers of parameters for the latent space dynamics of Vanilla-FNN, TA-ROM, and tLaSDI are $910K$, $870K$, and $900K$ respectively.
The training of all models is terminated after about $12K$ seconds of wall-clock time, which amounts to approximately $85K$ and $960K$ iterations for tLaSDI and Vanilla-FNN respectively. 
For TA-ROM, the training involves $660K$ and $190K$ iterations for SAE and SPNN respectively, leading to a total of $850K$ iterations.
The number of iterations for SAE and SPNN is adjusted to enhance the performance of TA-ROM.
Other implementation details can be found in Appendix~\ref{sec:app_GC}.

Figure~\ref{fig:GC_loss_figures} shows the training loss and extrapolation error trajectories by the three methods with respect to the wall-clock time in seconds.
We run 10 independent simulations and report all the training loss trajectories on the left.
Similarly as before, we see a clear separation of the loss between the three methods because of the different scales of the training loss functions.
Yet, all the methods reach to some saturated levels at the end of the training.
On the right, we report the extrapolation errors \eqref{eq:rel_l2_nonpara} with respect to the wall time.
The means of the 10 simulations are shown as solid lines and the shaded areas are one standard deviation away from the mean.
Again, it is clearly observed that tLaSDI yields not only the smallest extrapolation error but also the smallest variance overall, especially toward the end of the training.
This again illustrates the effectiveness of tLaSDI in extrapolation.



\begin{figure}[!ht]
	\centering
 {\includegraphics[width=0.49\textwidth]{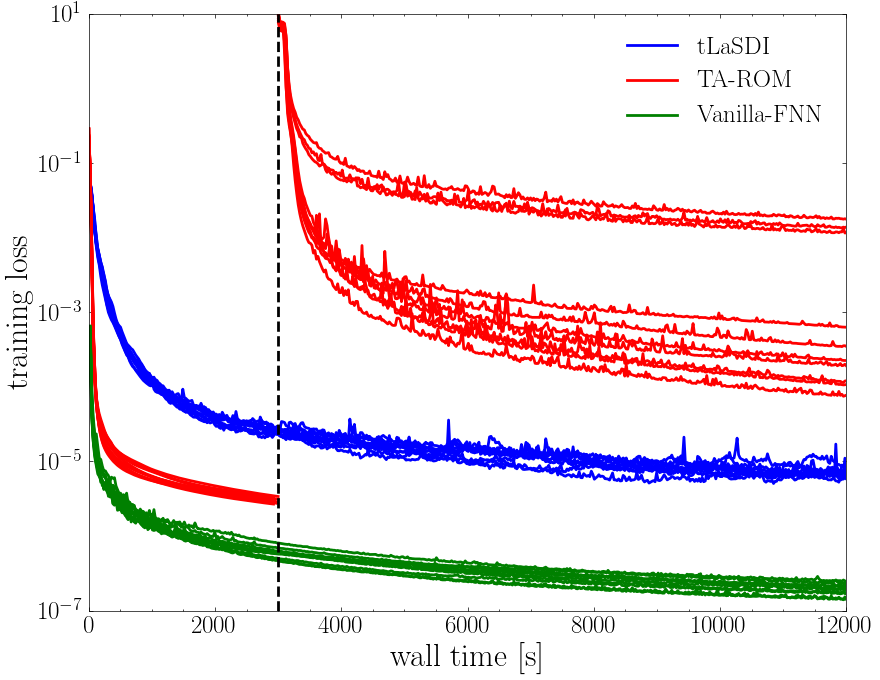}
 \includegraphics[width=0.49\textwidth]{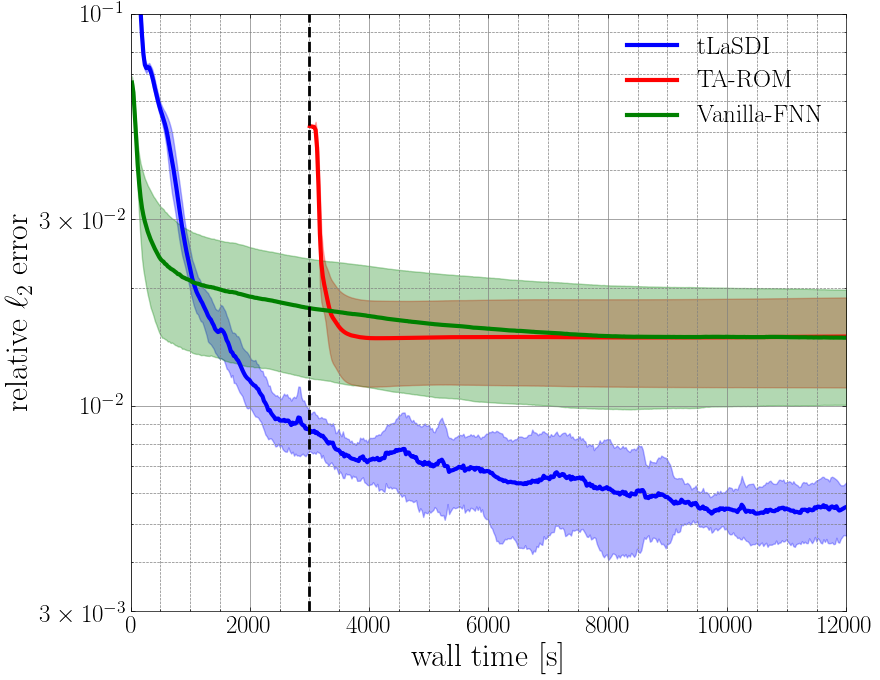}
 }
 \captionsetup{justification=justified}
 \caption{Example \ref{sec:gas_containers}. Left: The training loss trajectories for 10 simulations versus the wall time by the three methods. Right: The mean and one standard deviation away from the mean of the extrapolation errors \eqref{eq:rel_l2_nonpara} versus the wall time. The vertical line indicates the time for TA-ROM trains SAE.}
 \label{fig:GC_loss_figures}
 \end{figure}

In addition, we report some prediction trajectories (extrapolation regions) in 
Figure~\ref{figure:var_4_nodes_tLaSDI_vs_TAROM_vs_VFNN}. 
The trajectories are generated from the model that achieves the smallest loss among 10 independent simulations. 
The black dashed lines represent the corresponding GT trajectories. 
Again, it is clearly seen that tLaSDI exhibits robust extrapolation capability, especially for the position $q$ (first column) and the momentum $p$ (second column).
In contrast, TA-ROM and Vanilla-FNN yield incorrect trajectories for $q$ and $p$, which are typical behaviors of the non-intrusive approaches.
The extrapolation errors by tLaSDI, TA-ROM, and Vanilla-FNN are 
$3.96\times 10^{-3}$, $1.31\times 10^{-2}$ and $2.01\times 10^{-2}$, respectively.


 \begin{figure}[!ht]
	\centering
 {\centering
    \includegraphics[width=\textwidth]{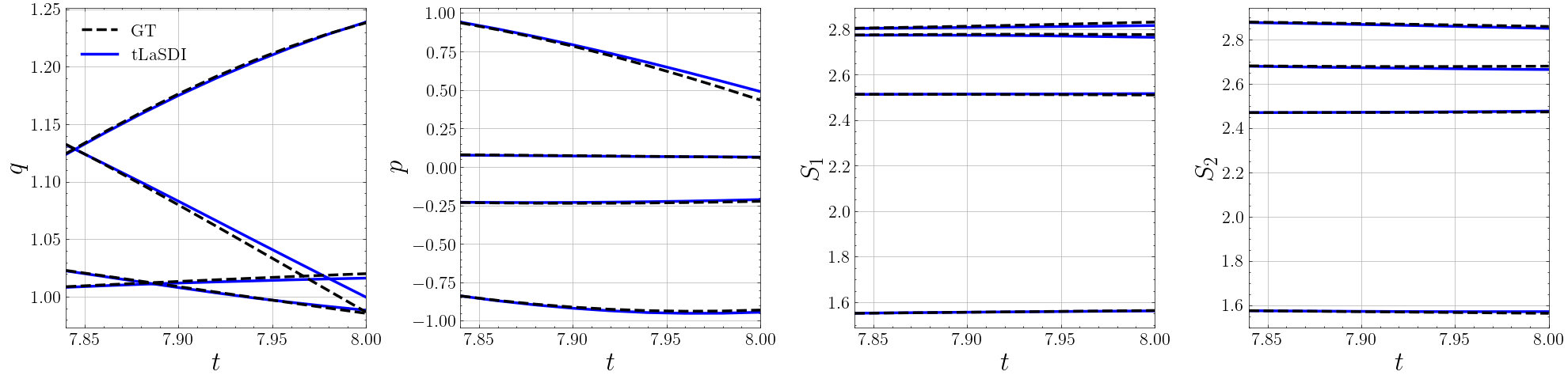}
  \includegraphics[width=\textwidth]{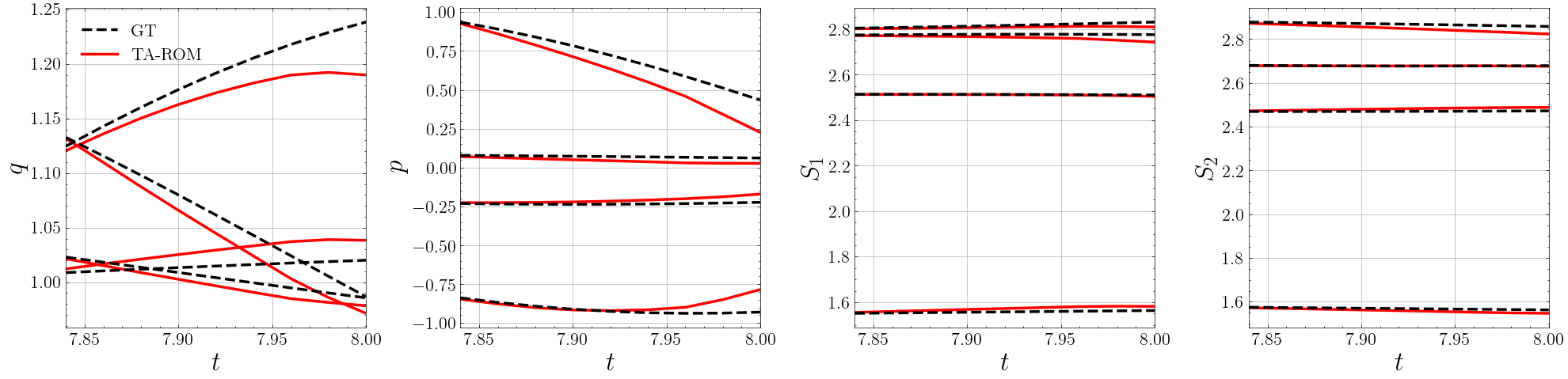}
  \includegraphics[width=\textwidth]{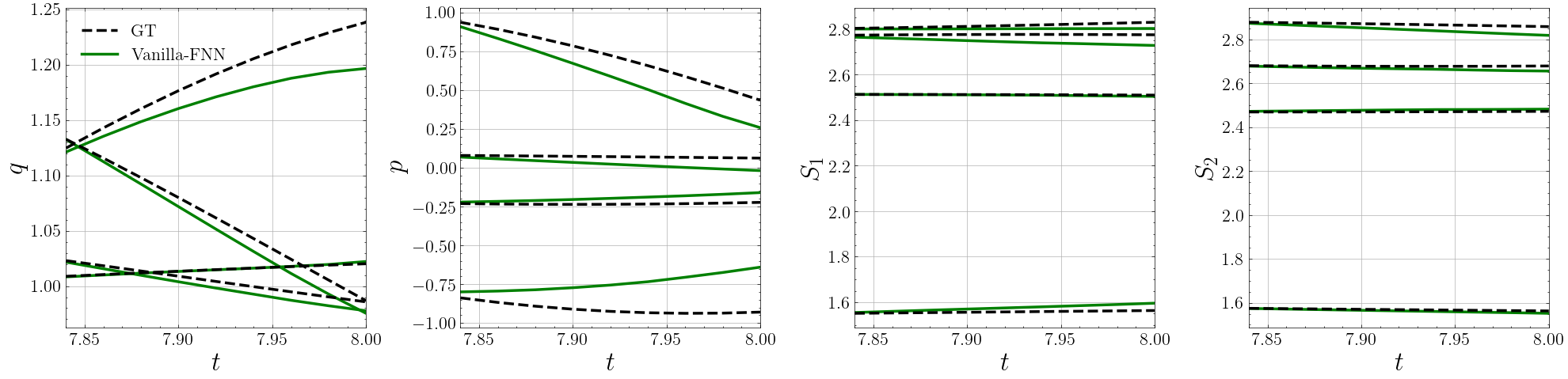}}
  \caption{Example \ref{sec:gas_containers}: Four different GT trajectories and the corresponding predictions by tLaSDI (top row), TA-ROM (middle row), and Vanilla-FNN (bottom row). Each method uses the model with the smallest loss from $10$ independent simulations.}
\label{figure:var_4_nodes_tLaSDI_vs_TAROM_vs_VFNN_GC}
\end{figure}

Lastly, we demonstrate the effectiveness of the proposed loss formulation \eqref{eq:loss_tLaSDI_GFINNs}.
Similar to the earlier example, we consider the four different loss configurations.
The standard loss yields the error of 
 $1.79 \times 10^{-2} \pm 6.05 \times 10^{-3}$ .
The modeling loss added to the standard one gives the error of 
$7.29 \times 10^{-3} \pm 8.26\times 10^{-4}$.
With the Jacobian loss being added to the standard loss, the error becomes  $5.52 \times 10^{-3} \pm 7.54\times 10^{-4}$.
The proposed loss with all the terms yields the error of $5.52 \times 10^{-3} \pm 8.13 \times 10^{-4}$.
In this example, we found that the Jacobian loss is particularly effective, which is newly introduced in the present work from Theorem~\ref{thm:total_err}.

\subsection{Parametric PDEs}
For parametric PDE examples, we are interested in capturing a distinct behavior of the solution over time, and at the same time, in achieving accurate predictions on unseen parameters. 
In this regard, the prediction error at a given parameter $\mu$ is measured by the maximum percentage relative $\ell_2$ error over time, i.e.,
\beq
\label{eq:max_rel_error}
e^{\text{max}\%}_{\mu} = 100 \times \max\limits_{k} \frac{\norm{\bfa{x}^k_\mu-\widetilde{\bfa{x}}^k_\mu}_{2}}{\norm{\bfa{x}^k_\mu}_{2}}. 
\eeq
Let $u_\mu(x,t)$ be the solution to a parametric PDE. 
The full state variable $\bfa{x}^k_\mu$ then corresponds to the collection of the solution at time $t_k$ evaluated at $N$ grid points $\{x_j\}_{j=1}^N$.
That is, $\bfa{x}_\mu^k = (u_\mu(x_j,t_k))_j \in \mathbb{R}^N$.

To ensure high prediction accuracy, the training data $\bfa{x}_\mu^k$ need to be collected at carefully selected parameters, which are often referred to as training parameters or training samples.
In this regard, we employ the greedy sampling algorithm proposed in \cite{he2023glasdi} for training data generation.
The greedy sampling utilizes a physics-informed residual-based error indicator to select proper training parameters from $\Omega_\mu$, during training.
Training data is produced on-the-fly at the sampled parameters during the training process.
The greedy sampling is known to outperform the traditional predetermined uniform (equidistant) sampling in terms of prediction accuracy \cite{he2023glasdi}.
We remark that the greedy sampling method may yield distinct training parameter selections for different ROM models. 
The implementation details of greedy sampling for the following examples can be found in Appendix \ref{sec:app_1DBG}.

We compare tLaSDI with another NM ROM, gLaSDI \cite{he2023glasdi} that proposed the aforementioned greedy sampling algorithm. 
gLaSDI uses a user-defined library of candidate basis functions (e.g., polynomials, trigonometric functions) to represent the latent space dynamics. 
A brief overview of gLaSDI can be found in Appendix \ref{app:overview-other-method}.


\subsubsection{1D Burgers' equation}\label{sec:1DBG}
We consider the 1D parametric inviscid Burgers' equation, which serves as a simplified model to exhibit the formation of shock waves in fluid dynamics and nonlinear acoustics. 
The parametric PDE under consideration reads
\beq
\label{eq:1DBG}
\begin{split}
\frac{\partial u_{\mu}}{\partial t} + u \frac{\partial u_{\mu}}{\partial x} = 0, \quad &\forall \ (x,t) \in (-3,3) \times (0,2], \\
u_{\mu}(3,t) = u_{\mu}(-3,t), \quad &\forall t \in (0,2] \\
u_{\mu}(x,0) = \alpha \exp\left(-\frac{x^2}{2\omega^2}\right), \quad &\forall x \in (-3,3),
\end{split}
\eeq
where the initial condition is parametrized by the amplitude $\alpha$, and the width $\omega$.
Let $\mu = (\alpha, \omega)$ and the parameter domain be $\Omega_{\mu}= [0.7,0.8]\times[0.9,1.0]$. 
Both parameters $\omega$ and $\alpha$ are discretized into $21$ evenly spaced points within their respective ranges. This discretization yields a set of $441$ distinct parameters in the parameter domain.
The training data are discretized with the spatial and time spacing $\Delta x = 6/200$, $\Delta t = 2/200$. Consequently, the dimension of the full-order model is $N=201$. 
We collect $25$ training data at the parameters selected using the greedy sampling algorithm, during training. 
At the end of the training, tLaSDI and gLaSDI generated distinct sets of training data. 
Additional details on implementation are provided in Appendix \ref{sec:app_1DBG}.

The latent dimension is set to $n=10$. The total number of parameters for hyper-autoencoder and GFINNs in tLaSDI are approximately $905K$ and $35K$ respectively.
tLaSDI is trained by \texttt{Adam} optimizer \cite{kingma2014adam} for $42K$ iterations.

The heatmaps of the maximum percentage relative ${\ell}_2$ errors (\ref{eq:max_rel_error}) of tLaSDI and gLaSDI computed at all 441 parameters are depicted in Figure \ref{fig:BG_tLaSDI_vs_gLaSDI}. 
The black square boxes indicate the training parameters sampled using the greedy sampling algorithm. 
It is observed that the sampled parameters for tLaSDI tend to cluster near the boundaries, while no notable pattern is observed for gLaSDI.
In terms of accuracy, tLaSDI and gLaSDI give the worst-case errors of $1.5\%$ and $6.0\%$, respectively, among $441$ parameters.


tLaSDI constructs a thermodynamic structure in the latent dynamics which provides the NN-based entropy function $S_{\text{NN}}$ in the latent space.
Figure~\ref{fig:BG_tLaSDI_vs_gLaSDI} shows the data-driven entropy function and its rate of change with respect to time.
The mean over all the test parameters is reported and the shared area is one standard deviation away from the mean. 
As promised by GFINNs, we see that the entropy is increasing as the rate is always non-negative. 
The entropy production rate $\frac{d}{dt}S_{\text{NN}}(\bfa{z}(t))$ has the largest value at the final time $t=2$, which happens to be the time when the full-state solution exhibits the stiffest pattern.
See the right of Figure~\ref{fig:BG_tLaSDI_vs_gLaSDI} for the tLaSDI prediction at varying times.

  \begin{figure}[t]
	\centering
	\captionsetup{justification=centering}
		\begin{subfigure}{0.49\textwidth}
  \includegraphics[width=\textwidth]{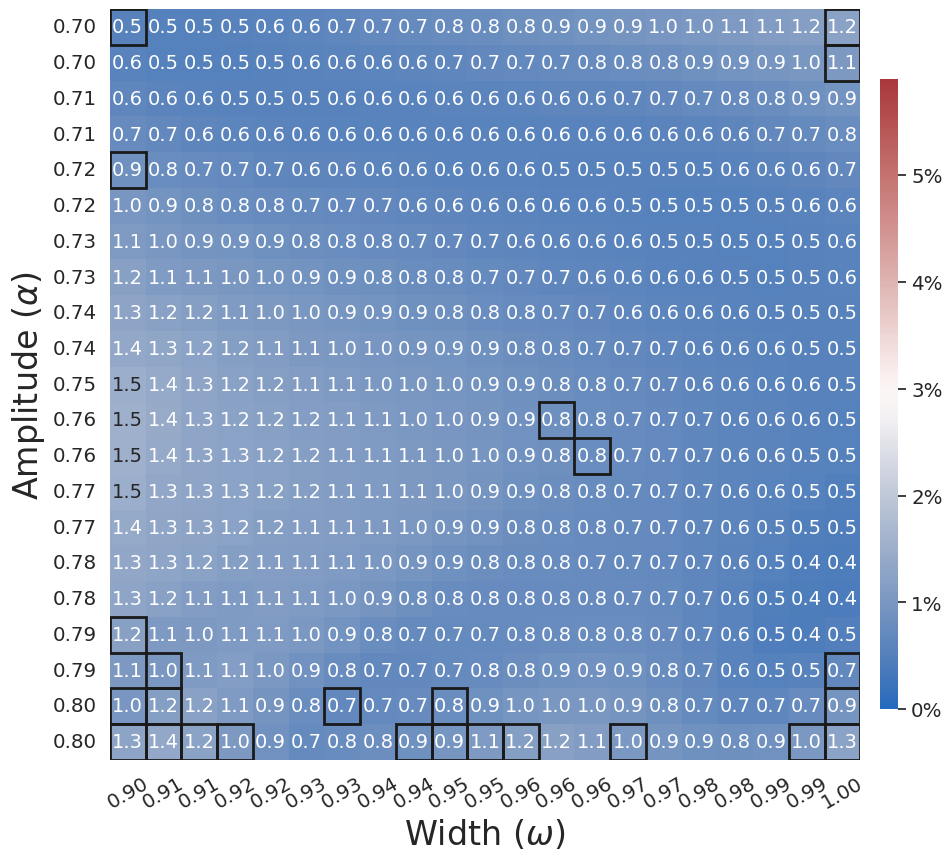}
  \caption{tLaSDI}
  \label{fig:1DBG_heatmap_gLaSDI_latent10}
\end{subfigure}
  \begin{subfigure}{0.49\textwidth}
  \includegraphics[width=\textwidth]{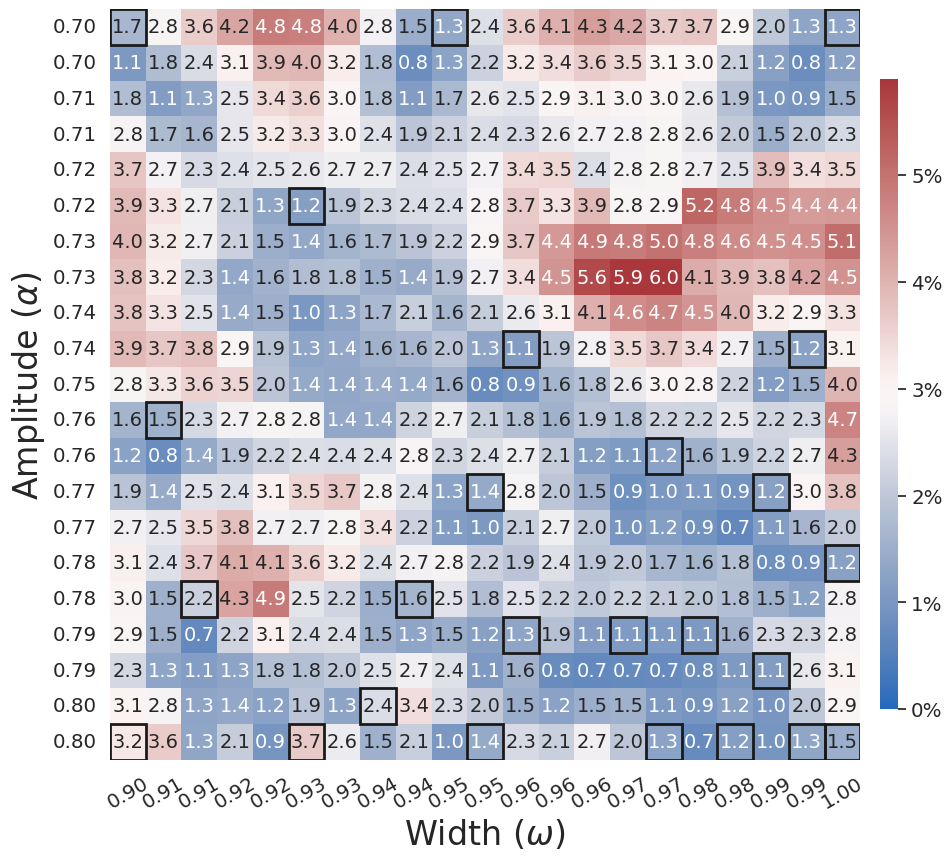}
  \caption{gLaSDI}
  \label{fig:1DBG_heatmap_tLaSDI_latent10}
 \end{subfigure}
 \captionsetup{justification=justified}
 \caption{Example \ref{sec:1DBG}. Comparison between tLaSDI and gLaSDI \cite{he2023glasdi}. The number on each box represents the maximum percentage relative error (\ref{eq:max_rel_error}) computed at each parameter. The black square boxes denote the positions of the sampled training points.}
 \label{fig:BG_tLaSDI_vs_gLaSDI}
 \end{figure}

     \begin{figure}[!th]
	\centering
 {\includegraphics[width=0.32\textwidth, height=3.25cm]{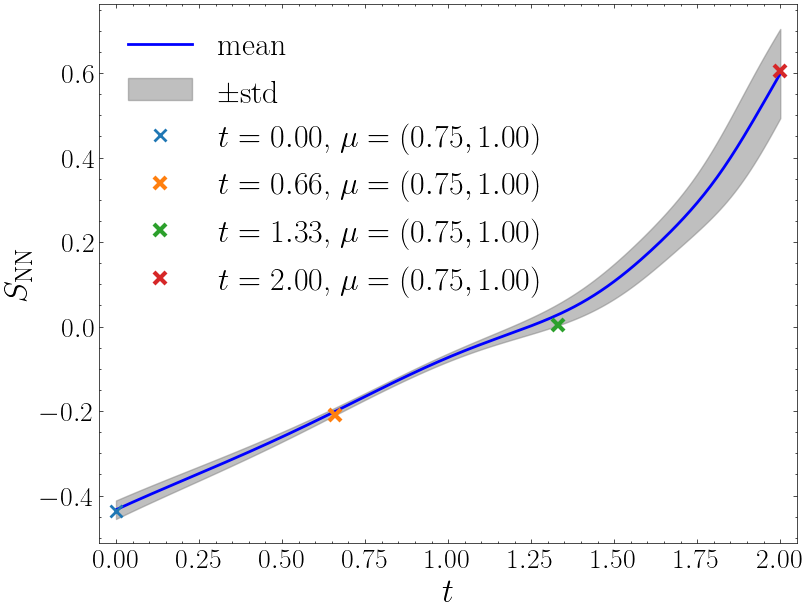}
 \includegraphics[width=0.32\textwidth, height=3.25cm]{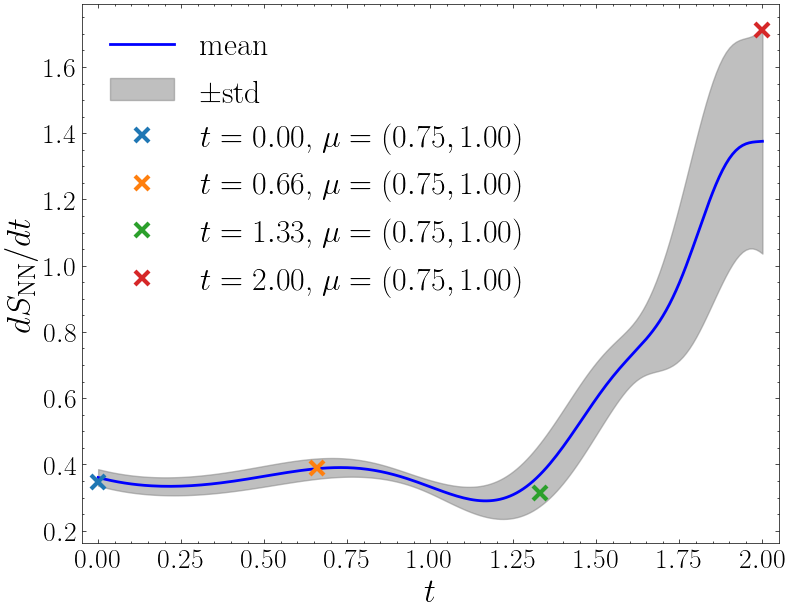}
 \includegraphics[width=0.32\textwidth, height=3.25cm]{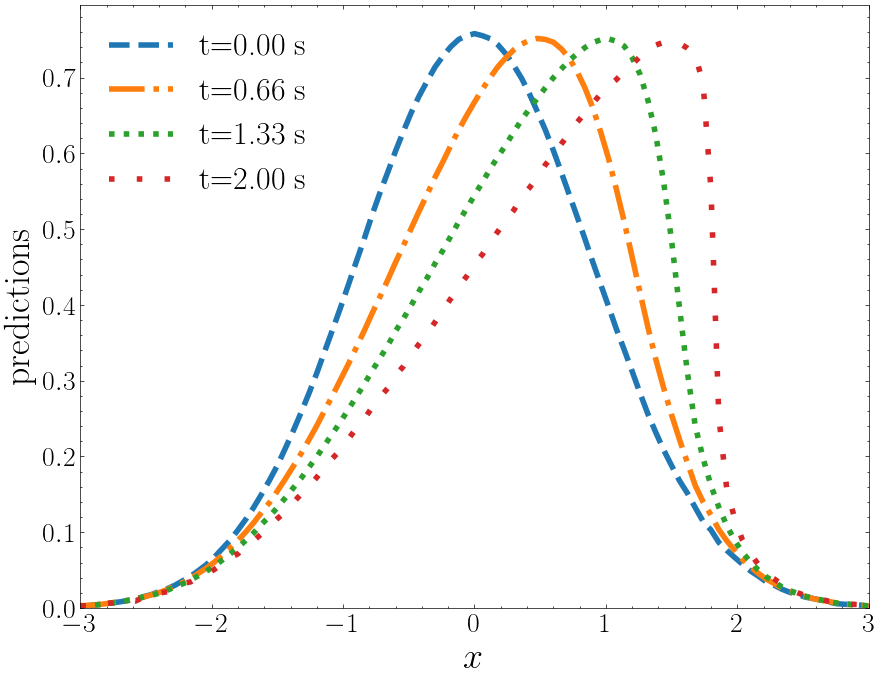}
 }
 \captionsetup{justification=justified}
 \caption{Example \ref{sec:1DBG}. The mean and one standard deviation away from the mean of (left) the entropy $S_\text{NN}$ and (middle) $\frac{d}{dt}S_\text{NN}$ across the test parameters. Right: The solution prediction by tLaSDI at the four times at $\mu = (0.75,1.00)$ whose corresponding $S_\text{NN}$ and  $\frac{d}{dt}S_\text{NN}$ are marked in the left and middle figures.}
 \label{fig:BG_mean_std_S_dSdt_sol}
 \end{figure}

\subsubsection{1D heat equation}
\label{sec:1DHT}
To further investigate the correlation captured by the data-driven entropy from tLaSDI, we consider the following parametric heat equation:
\begin{equation*}
\begin{split}
\frac{\partial u_{\mu}}{\partial t} - \frac{\partial^2 u_{\mu}}{\partial x^2}= 0, \quad &\forall \ (x,t) \in (-3,3) \times (0,2],
\end{split}
\end{equation*}
with the same boundary and initial conditions to the Burgers' equation (\ref{eq:1DBG}).
We follow the same setup as the earlier example and report the details in Appendix~\ref{sec:app_1DBG}.

In Figure~\ref{fig:HT_mean_std_S_dSdt_sol}, we report the graphs of the data-driven entropy from tLaSDI
and its rate along with the tLaSDI prediction at varying times. 
In contrast with the Burgers' equation, it is clearly observed that the entropy production rate $\frac{d}{dt}S_{\text{NN}}(\bfa{z}(t))$ decreases over time, reflecting the diffusive behavior of the solution to the heat equation.
This comparison illustrates a distinct feature of tLaSDI through the data-driven entropy which exhibits a strong correlation with the underlying physical processes.
     \begin{figure}[h]
	\centering
 {\includegraphics[width=0.32\textwidth, height=3.25cm]{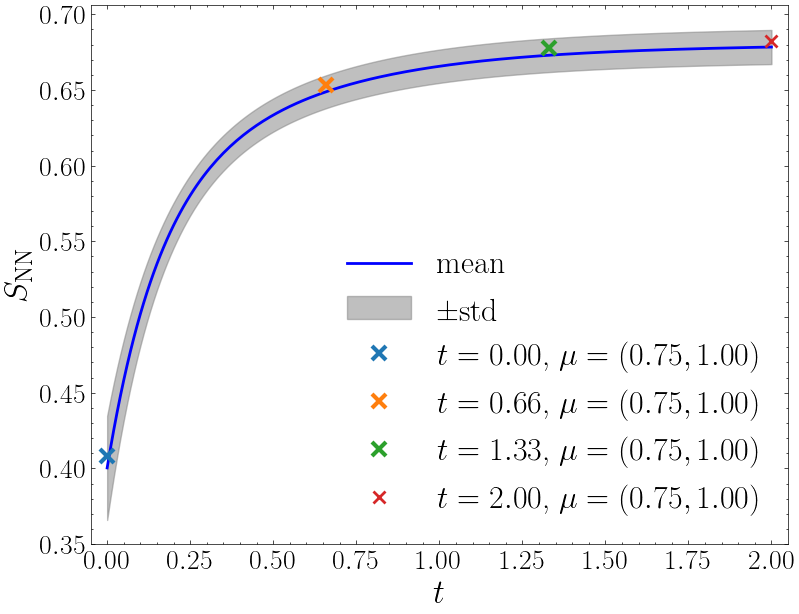}
 \includegraphics[width=0.32\textwidth, height=3.25cm]{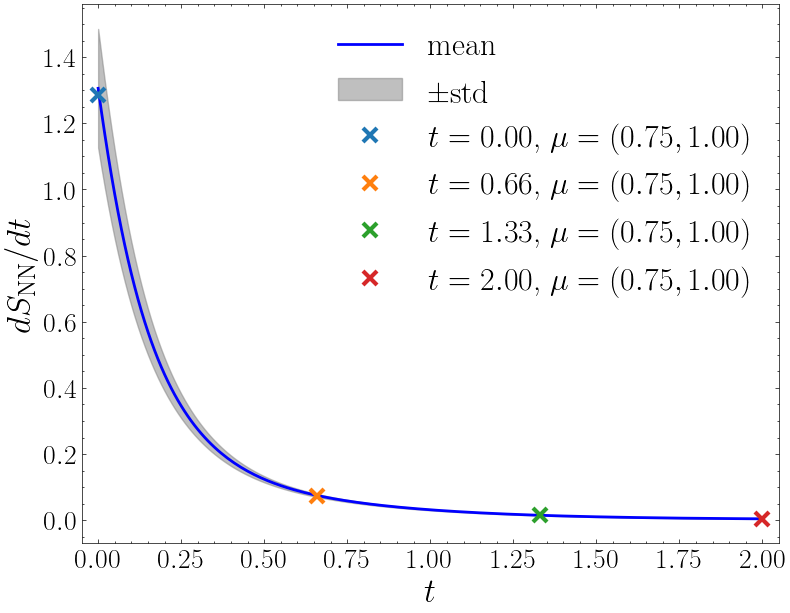}
 \includegraphics[width=0.32\textwidth, height=3.25cm]{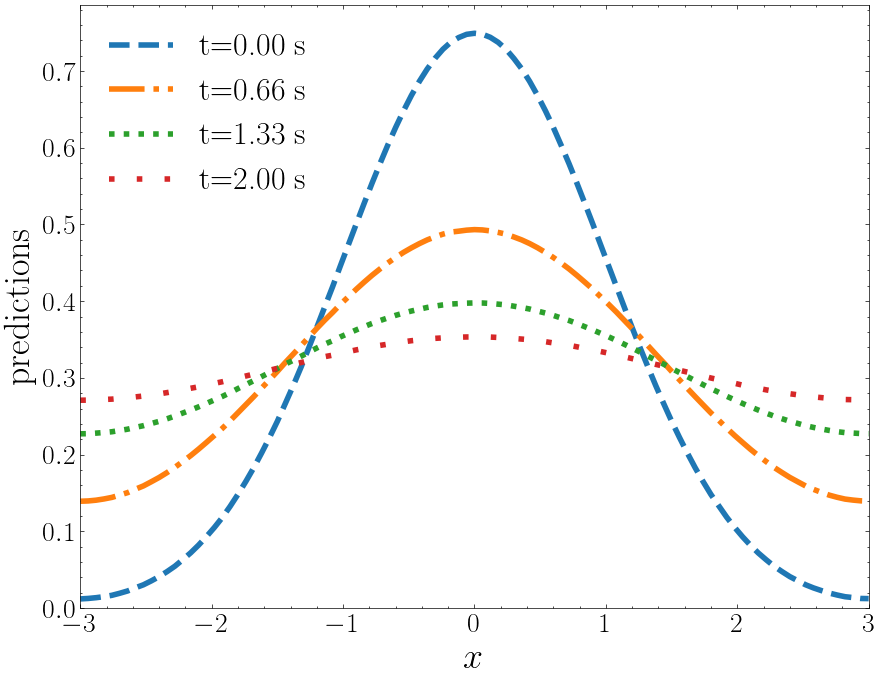}
 }
 
 \captionsetup{justification=justified}
 \caption{Example \ref{sec:1DHT}. The mean and one standard deviation away from the mean of (left) the entropy $S_\text{NN}$ and (middle) $\frac{d}{dt}S_\text{NN}$ across the test parameters. Right: The solution prediction by tLaSDI at the four times at $\mu = (0.75,1.00)$ whose corresponding $S_\text{NN}$ and  $\frac{d}{dt}S_\text{NN}$ are marked in the left and middle figures.}
 \label{fig:HT_mean_std_S_dSdt_sol}
 \end{figure}

\section{Conclusions}
\label{sec:conclusion}
We propose a non-intrusive thermodynamics-informed ROM method, namely, tLaSDI.
tLaSDI uses an autoencoder to construct a nonlinear manifold as the latent space and models the latent dynamics through GFINNs.
GFINNs are structured to precisely satisfy the first and second laws of thermodynamics through the GENERIC formalism.
Based on an abstract error estimate of the ROM approximation, a new loss is formulated, which significantly improves the performance of tLaSDI.
Numerical examples are presented to demonstrate the effectiveness of tLaSDI.
For the data-driven discovery tasks, tLaSDI exhibits robust extrapolation ability, which typical pure data-driven approaches lack.
For the parametric PDE problems, we found that tLaSDI provides smaller prediction errors on unseen parameters when it is compared with gLaSDI.
Due to the thermodynamic infusion, tLaSDI returns the data-driven entropy function defined in the latent space.
We found that this entropy function from tLaSDI captures not only the time on which the solution to the Burgers' equation is the stiffest but also the time on which the solution to the heat equation is the most diffusive through the entropy production rate.

\backmatter
\clearpage




\bmhead{Acknowledgements}
J.\ S.\ R.\ Park was partially supported by Miwon Du-Myeong Fellowship via Miwon Commercial Co., Ltd. and  a KIAS Individual Grant (AP095601) via the Center for AI and Natural Sciences at Korea Institute for Advanced Study.
J.\ S.\ R.\ Park would like to thank Dr. Quercus Hernandez and Zhen Zhang for their helpful guidance on the implementation of TA-ROM and GFINNs. 
S.\ W.\ Cheung and Y.\ Choi were partially supported for this work by Laboratory Directed Research and Development (LDRD) Program by the U.S. Department of Energy (24-ERD-035). 
Y.\ Choi was partially supported for this work by the U.S. Department of Energy, Office of Science, Office of Advanced Scientific Computing Research, as part of the CHaRMNET Mathematical Multifaceted Integrated Capability Center (MMICC) program, under Award Number DE-SC0023164 at Lawrence Livermore National Laboratory. 
Y.\ Shin was partially supported for this work by the NRF grant funded by the Ministry of Science and ICT of Korea (RS-2023-00219980). 
Lawrence Livermore National Laboratory is operated by Lawrence Livermore National Security, LLC, for the U.S. Department of Energy, National Nuclear Security Administration under Contract DE-AC52-07NA27344. IM release number: LLNL-JRNL-860848.

\noindent






\clearpage
\begin{appendices}

\section{An overview of existing methods} \label{app:overview-other-method}
In this section, we provide a brief overview of the existing NM ROM methods. 

{\bf Vanilla-FNN}: 
This algorithm simply leverages the standard feed-forward neural network (FNN) for both autoencoder and latent space dynamics. 
It employs a loss function composed of two standard components, integration loss ${\mathcal L}_{\text{int}}$ and reconstruction loss ${\mathcal L}_{\text{rec}}$.
In this framework, the autoencoder and the latent space dynamics (FNN) are trained simultaneously.

{\bf TA-ROM} (Thermodynamics-aware reduced-order models \cite{hernandez2021deep}): In this algorithm, the sparse autoencoder (SAE) \cite{ng2011sparse} is utilized for the dimension reduction.
The SPNN \cite{hernandez2021structure} is employed to embed the GENERIC formalism into latent space dynamics. 
In this framework, they train SAE first, and then SPNN subsequently. 
The training procedure utilizes the following loss formulation
\begin{equation*}
\label{eq:loss_tarom}
\bsp
{\mathcal L}_{\text{SAE}} & =  {\mathcal L}_{\text{rec}} + \lambda_{\text{spr}} {\mathcal L}_{\text{spr}}, \\
{\mathcal L}_{\text{SPNN}} & = \lambda_{\text{int}}{\mathcal L}_{\text{int}} +  {\mathcal L}_{\text{deg}}, 
\end{split}
\end{equation*}
in addition to the standard $\ell_2$ regularization for the weight decay of SPNN. In this framework, the dimension of latent space is not predetermined. It is rather discovered by minimizing
the sparsity loss ${\mathcal L}_{\text{spr}}$ \cite{ng2011sparse,hernandez2021deep} during SAE training to eliminate insignificant components of the latent vector.
The degeneracy condition of GENERIC formalism is not exactly satisfied with SPNN. Instead, it is enforced through the minimization of the degeneracy loss component ${\mathcal L}_{\text{deg}}$:
\begin{equation*}
\label{eq:loss_degeneracy}
{\mathcal L}_{\text{deg}} = \sum\limits_{k} \left(\norm{L_{NN} \nabla S_{NN} }_2^2 + \norm{M_{NN}\nabla E_{NN} }_2^2 \right),
\end{equation*}
where the neural networks are evaluated at $\phi_{\text{e}}(\bfa{x}^{k})$.  

{\bf gLaSDI} (Parametric physics-informed greedy latent space dynamics identification \cite{he2023glasdi}): This algorithm is developed for ROM of high-dimensional parametric dynamical systems. 
Its primary distinction from our proposed method lies in the latent space dynamics identification (DI) model. In gLaSDI, the latent space dynamics $F^r_{\mu}$ are identified using a user-defined library of candidate basis functions (e.g., polynomial, trigonometric, exponential functions) at each training parameter.  
To enable the identification of latent dynamics at unseen parameters, the $k$-nearest neighbors (KNN) convex interpolation is utilized.
The algorithm uses a greedy sampling strategy based on a physics-informed residual-based error indicator to select training parameters during the training process. 
The goal of greedy sampling is to efficiently generate training data on-the-fly that yield optimal prediction performance. 
In this framework, the autoencoder and the DI model are trained simultaneously minimizing the loss function ${\mathcal L}_{\text{gLaSDI}} = {\mathcal L}_{\text{rec}}+\lambda_{\text{mod}}{\mathcal L}_{\text{mod}}$.

The comparison of tLaSDI with TA-ROM and Vanilla-FNN is introduced in  Table \ref{tab:diff_tLaSDI_TA_ROM_gLaSDI}.
While specific DI models are proposed by default within each framework, it is worth noting that tLaSDI and TA-ROM are compatible with both GFINNs and SPNN.
However, for simplicity, the numerical results presented in this work adhere to the conventional pairings: tLaSDI with GFINNs and TA-ROM with SPNN.

 \begin{table}[ht!]
\centering
\begin{tabular}{|c||c|c|c|}
\hline
Method & DI model  & Loss components & Training AE/DI model\\
\hline
tLaSDI & GFINNs & ${\mathcal L}_{\text{int}}$, ${\mathcal L}_{\text{rec}}$, ${\mathcal L}_{\text{Jac}}$, ${\mathcal L}_{\text{mod}}$ & Simultaneous\\ 
 \hline
TA-ROM & SPNN & ${\mathcal L}_{\text{int}}$, ${\mathcal L}_{\text{rec}}$, ${\mathcal L}_{\text{spr}}$, ${\mathcal L}_{\text{deg}}$ & Separate\\
\hline
 Vanilla & FNN & ${\mathcal L}_{\text{int}}$, ${\mathcal L}_{\text{rec}}$ & Simultaneous\\ 
\hline
gLaSDI & Dictionary & ${\mathcal L}_{\text{rec}}$, ${\mathcal L}_{\text{mod}}$ & Simultaneous\\
\hline
\end{tabular}
\caption{Comparative overview of tLaSDI and other methods}
\label{tab:diff_tLaSDI_TA_ROM_gLaSDI}
\end{table}

\section{Implementation details}
\label{appendix:training_details}
In this appendix, we outline the implementation details, including training data generation, hyperparameter settings and NN architectures for the ROM models implemented in all numerical examples in Section \ref{sec:num}. 
\subsection{Couette flow of an Oldroyd-B fluid}
\label{sec:app_VC}
This section is about the implementations of tLaSDI, TA-ROM and Vanilla-FNN for Section \ref{sec:oldroyd}. 
The TA-ROM implementation is guided by the training details presented in \cite{hernandez2021deep} and the associated source code\footnote{\label{tarom}GitHub page: \url{https://github.com/quercushernandez/DeepLearningMOR}}.
The training of all methods is based on a full batch of training data.
In addition to minimizing the loss function, the standard ${\ell}_2$ regularization is utilized for the weight decay of DI models of each method. 
Learning rate schedulers are used for all methods. 
The Runge-Kutta-Fehlberg method (RK45) is utilized for the time-step integration of latent space dynamics during the training and prediction phases of all models. 

{\bf Data generation}: The training database is provided by the authors of \cite{hernandez2021deep} through the GitHub page\footnotemark[2]. 
Derivative data are derived from the training data using a central difference approximation. 
For endpoints at $t=0$ and $t=T$, where the central difference is inapplicable, forward and backward difference schemes are used, respectively.

{\bf Hyperparameters and NN architectures}:
The hyperparameters for each method are selected in order to optimize the predictive performance.
Table \ref{tab:loss_weights_all_methods_VC} lists hyperparameters employed for each method including loss weights, standard ${\ell}_2$ regularization weight ($\lambda_{\text{reg}}$),
initial learning rates ($l_r$) and their decay rates. 
The integration loss weight for tLaSDI and Vanilla-FNN is fixed at $1$. The reconstruction loss weight for tLaSDI and Vanilla-FNN is chosen from $\{10^{-5}, 10^{-4}, \dots, 1\}$.
Then the weight for the Jacobian loss component of tLaSDI is selected from $\{10^{-5}, 10^{-4}, \dots, 1\}$. Finally, we choose the model loss weight for tLaSDI from $\{10^{-8}, 10^{-7}, \dots, 1\}$.
The reconstruction and degeneracy loss weights for TA-ROM are set to $1$.
The sparsity and integration loss weights for TA-ROM are selected from $\{10^{-6}, 10^{-5}, \dots, 10^{-2}\}$ and $\{1, 10, \dots, 10^{4}\}$, respectively.
The initial learning rates for all models are selected from $\{10^{-6}, 10^{-5}, 10^{-4} , 10^{-3}\}$.
The learning rates of an autoencoder and DI model for tLaSDI and Vanilla-FNN decrease after every $1K$ iterations until they reach a minimum of $10^{-5}$.
For TA-ROM, the learning rate of SPNN is reduced after every $1K$ iterations until it reaches $10^{-6}$.

For the encoders of all models, we use 400-160-160-8 FNN architectures with ReLU activation functions, and symmetric architectures are employed for the decoders.
All NNs within the DI models of tLaSDI and TA-ROM have $5$ layers and $100$ neurons in each hidden layer, with hyperbolic tangent activation functions. 
The FNN for the latent space dynamics of Vanilla-FNN has $5$ layers and $215$ neurons within each hidden layer, with a hyperbolic tangent activation function.
All models consist of the same level of number of NN parameters given their architectures.


In TA-ROM, special attention should be given to adjusting the sparsity loss weight and the duration of SAE training due to its separate training nature.
It is observed that certain sparsity loss weights (e.g., $10^{-4}$) and durations of SAE training often yield overfitting or large standard deviation of the extrapolation errors. 
In our implementations, the sparsity loss weight $10^{-6}$ worked the best with $1500$ seconds of SAE training.

\begin{table}[h]
\centering
\begin{tabular}{|c||c|c|c|c|c|c|c|c|c|}
\hline
Method&$\lambda_{\text{int}}$& $\lambda_{\text{rec}}$ & $\lambda_{\text{Jac}}$& $\lambda_{\text{mod}}$  &$\lambda_{\text{deg}}$ &$\lambda_{\text{spr}}$ &$\lambda_{\text{reg}}$  & Initial $l_r$ & $l_r$ decay rate  \\
\hline
tLaSDI  & 1 &  1e-1 & 1e-2  &1e-8 & N/A & N/A & 1e-8& 1e-4& 1\%\\ 
 \hline
TA-ROM  & 1e2 &  1  & N/A  & N/A  & 1 & 1e-6& 1e-5& 1e-5/1e-5 & 0\% / 5\% \\ 
\hline
Vanilla-FNN & 1 &  1e-1  & N/A  & N/A  & N/A & N/A&1e-8& 1e-4 &  1\%\\ 
\hline
\end{tabular}
\caption{Example \ref{sec:oldroyd}: Hyperparameters used for implementations of tLaSDI, TA-ROM, and Vanilla-FNN; The learning rate and its decay rate for TA-ROM correspond to SAE/SPNN.}
\label{tab:loss_weights_all_methods_VC}
\end{table}

\subsection{Two gas containers exchanging heat and volume}
\label{sec:app_GC}
The implementation details of tLaSDI, TA-ROM, and Vanilla-FNN, for the results presented in Section \ref{sec:gas_containers}, are presented in this section.
The TA-ROM implementation is guided by the training details in \cite{hernandez2021deep} and its associated code\footnote{GitHub page: \url{https://github.com/quercushernandez/DeepLearningMOR}}.
Full batches of training data are used in the training of all methods. 
Learning rate schedulers are utilized for training all models. 
The Runge-Kutta second/third order integrator (RK23) is employed for temporal integration of latent space dynamics for all methods. 
The standard ${\ell}_2$ regularization is used for the weight decay of SPNN for TA-ROM. 

{\bf Data generation}: The training data are generated by the Runge-Kutta-Fehlberg method (RK45) based on the source code\footnote{GitHub page: \url{https://github.com/zzhang222/gfinn\_gc/tree/main}} from the original GFINNs study \cite{zhang2022gfinns}. 
The corresponding derivative data are obtained by applying the central difference approximation scheme to the training data.
At the endpoints $t=0$ and $t=T$, where the central difference scheme is not applicable, we employ forward and backward difference schemes, respectively.

{\bf Hyperparameters and NN architectures}: The hyperparameters for all methods are adjusted to enhance the extrapolation performance. 
Table \ref{tab:loss_weights_all_methods_GC} presents the hyperparameter settings for each method including loss weights, standard ${\ell}_2$ regularization weights ($\lambda_{\text{reg}}$),
initial learning rates ($l_r$), and the rate of learning rate decay.
For tLaSDI and Vanilla-FNN, the integration loss weight is consistently set at $1$. Selection of the reconstruction loss weight for both tLaSDI and Vanilla-FNN ranges within $\{10^{-5}, 10^{-4}, \dots, 1\}$. Then the Jacobian loss weight for tLaSDI is determined from the set $\{10^{-5}, 10^{-4}, \dots, 1\}$. 
Finally, the model loss weight is chosen from $\{10^{-8}, 10^{-7}, \dots, 1\}$. 
For TA-ROM, both reconstruction and degeneracy loss weights are fixed at $1$. 
The choices for sparsity and integration loss weights for TA-ROM are made from $\{10^{-6}, 10^{-5}, \dots, 10^{-2}\}$ and $\{1, 10, \dots, 10^{4}\}$, respectively.
Initial learning rates for all models are selected from $\{10^{-6}, 10^{-5}, 10^{-4}, 10^{-3}\}$. 
The learning rates for autoencoders and DI models for all models are reduced after every $1K$ iterations until they reach $10\%$ of the initial learning rates.

We use 400-200-100-30 FNN architectures for the encoders of all models and symmetric architectures for the decoders, with ReLU activation functions, 
All NNs within the DI models of tLaSDI and TA-ROM employ $5$ layers and $200$ neurons in each hidden layer, with sine activation functions. 
The latent space dynamics (FNN) of Vanilla-FNN consists of $5$ layers and $540$ neurons within each hidden layer, with a sine activation function.
Given the architectures, all models consist of the same level of number of trainable NN parameters. 

For TA-ROM, which utilizes the separate training of SAE and SPNN, the training duration of SAE should be carefully selected for optimal predictive performance. 
In our implementations, the duration of SAE training affects the reconstruction error of SAE and standard deviations of extrapolation errors. 
The best extrapolation performance was achieved with $3K$ seconds of SAE training. 

\begin{table}[h]
\centering
\begin{tabular}{|c||c|c|c|c|c|c|c|c|c|}
\hline
Method &$\lambda_{\text{int}}$& $\lambda_{\text{rec}}$ & $\lambda_{\text{Jac}}$& $\lambda_{\text{mod}}$ &$\lambda_{\text{deg}}$ &$\lambda_{\text{spr}}$ &$\lambda_{\text{reg}}$  & Initial $l_r$ & $l_r$ decay rate    \\
\hline
tLaSDI  & 1 &  1e-1  & 1e-2  &1e-7 & N/A & N/A &0&1e-4 & 1\%\\ 
 \hline
TA-ROM & 1e3 &  1  & N/A  & N/A   & 1 & 1e-6 &1e-5& 1e-5/1e-5 & 1\% / 1\%\\ 
\hline
 Vanilla-FNN & 1 &  1e-1  & N/A  & N/A   & N/A & N/A &0&1e-4& 1\%\\ 
 \hline
\end{tabular}
\caption{Example \ref{sec:gas_containers}. Hyperparameters used for implementations of tLaSDI, TA-ROM, and Vanilla-FNN; The learning rate and its decay rate for TA-ROM correspond to SAE/SPNN.}
\label{tab:loss_weights_all_methods_GC}
\end{table}

\subsection{1D Burgers' equation}
\label{sec:app_1DBG}
The implementation details for the tLaSDI and gLaSDI \cite{he2023glasdi} corresponding to the results in Section \ref{sec:1DBG}, are presented in this section.
For the gLaSDI implementation, we followed the algorithm detailed in \cite{he2023glasdi}, utilizing the code\footnote{GitHub page: \url{https://github.com/LLNL/gLaSDI}} provided by the authors.
The batch size of training data is selected so that the number of batches is $4$ for tLaSDI and $2$ for gLaSDI. 
The Runge-Kutta-Fehlberg method (RK23) is employed for the time-step integration of latent dynamics for tLaSDI.
The time integration of latent dynamics for gLaSDI is performed using an ODE solver, odeint function of the Scipy library \cite{virtanen2020scipy}.

{\bf Greedy sampling}: The greedy sampling algorithm is employed for both tLaSDI and gLaSDI. The training initiates with four initial data points collected at the corners of the parameter domain, ${(0.7,0.9), (0.7,1.0), (0.8,0.9), (0.8,1.0)}$.
After every $2,000$ epochs, training is paused, and a new parameter that maximizes a physics-informed residual-based error is selected. 
At this newly identified parameter, a high-fidelity simulation is executed to solve equation (\ref{eq:1DBG}), and the solution is added to the training dataset. 
Then the training is resumed with the updated training dataset. 
This process is repeated until the dataset encompasses $25$ training parameters, including the initial four. 
For more detailed insights into each step, refer to \cite{he2023glasdi}.

{\bf Data generation}: We initially generate solution data with uniform spatial and time steps of $\Delta x = 6/1000$ and $\Delta t = 2/1000$, respectively.
The step sizes of the spatial and temporal discretizations are small enough to ensure high accuracy of the generated solution data. 
The solution data are generated using an implicit Euler time integration method, while the corresponding derivatives are calculated using a backward difference scheme. 
From this dataset, the training data and the corresponding derivatives are subsampled with the resolution of $\Delta x = 6/200$ and $\Delta t = 2/200$.

{\bf Hyperparameters and NN architectures}:
The hyperparameters are selected to yield optimal predictive performance of each method. 
The hyperparameters including loss weights and learning rate scheduling for training tLaSDI and gLaSDI are listed in Table \ref{tab:loss_weights_all_methods_BG}.
For tLaSDI, the reconstruction loss weight is selected from $\{10^{-2}, 10^{-1}\}$. 
Then the Jacobian loss weight is determined from the set $\{10^{-9}, 10^{-8}, \dots, 10^{-2}\}$. 
Finally, model loss weight for tLaSDI is chosen from $\{10^{-8}, 10^{-7}, 10^{-6}, 10^{-5}\}$. 
For gLaSDI, the reconstruction loss weight is fixed at $1$, and the model loss weight is selected from $\{10^{-3}, 10^{-2}, \dots, 10^2\}$.
Initial learning rates for both models are selected from $\{10^{-5}, 10^{-4}, 10^{-3}\}$. 
The learning rate decay is conducted for both autoencoder (hypernetworks) and GFINNs in tLaSDI after every $1000$ epochs. 

We employ 201-100-10 FNN architectures for the encoders of tLaSDI and gLaSDI, and symmetric architectures are applied for the decoders.
The autoencoders of tLaSDI and gLaSDI utilize hyperbolic tangent and Sigmoid activation functions respectively. 
All NNs in GFINNs of tLaSDI consist of $5$ layers and $40$ neurons within each hidden layer and hyperbolic tangent activation functions. 
The hypernetworks for the encoder and decoder of tLaSDI have $3$ layers and $20$ neurons within each hidden layer with hyperbolic tangent activation functions. 

Further specification of hyperparameters is required for gLaSDI.
A quadratic polynomial is employed in the dictionary-based DI model.
The KNN convex interpolation scheme utilizes $1$ and $5$ nearest neighbors to evaluate the model at unseen parameters (beyond the training parameters) during greedy sampling and prediction, respectively.
For a more comprehensive understanding of the hyperparameters of gLaSDI, readers are referred to \cite{he2023glasdi}.

\begin{table}[t]
\centering
\begin{tabular}{|c||c|c|c|c|c|c|}
\hline
Method &$\lambda_{\text{int}}$& $\lambda_{\text{rec}}$ & $\lambda_{\text{Jac}}$& $\lambda_{\text{mod}}$   & Initial $l_r$ & $l_r$ decay rate    \\
\hline
tLaSDI  & 1 &  1e-1  & 1e-9  &1e-7  &1e-4 & 1\%\\ 
 \hline
gLaSDI & N/A &  1  & N/A  & 1  &1e-3& 0\%\\ 
 \hline
\end{tabular}
\caption{Example \ref{sec:1DBG}. Hyperparameters used for implementations of tLaSDI and gLaSDI}
\label{tab:loss_weights_all_methods_BG}
\end{table}

\end{appendices}

\clearpage
\bibliography{reference-tLaSDI,reference-ROM}

\end{document}